\pgfplotsset{compat=1.18}
\newcommand{\bfD}{{\bf D}}
\newcommand{\bfH}{{\bf H}}
\newcommand{\bfM}{{\bf M}}
\newcommand{\bfN}{{\bf N}}
\newcommand{\bfh}{{\bf h}}
\definecolor{ForestGreen}{HTML}{3F8A2D}    
\definecolor{RoyalBlue}{HTML}{000AFF}     
\definecolor{CustomOrange}{HTML}{F2AD4E}   
\newcommand{\ie}{i.e., }
\definecolor{rebuttalcolor}{HTML}{FF6E00} 
\definecolor{goodcell}{HTML}{bfd9c1}
\definecolor{badcell}{HTML}{e3a6a6}
\definecolor{first}{HTML}{00A64F}  
\definecolor{second}{HTML}{006EB8} 
\newcommand{\one}[1]{\textcolor{first}{\bf#1}}
\newcommand{\two}[1]{\textcolor{second}{\bf#1}}
\newcommand{\three}[1]{{\bf#1}}
\newtheorem{theorem}{Theorem}
\newtheorem{definition}{Definition}
\newtheorem{proposition}{Proposition}
\newcommand{\ourmethod}{\textsc{Tango}}
\title{\ourmethod{}: Graph Neural Dynamics
via Learned Energy and Tangential Flows}
\author{%
  Moshe Eliasof \\
  University of Cambridge\\
  United Kingdom \\
  \texttt{me532@cam.ac.uk} \\
\And 
  Eldad Haber \\
  University of British Columbia\\
  Canada \\
  \texttt{ehaber@eoas.ubc.ca} \\ 
  \And 
  Carola-Bibiane Sch\"onlieb \\
  University of Cambridge\\
  United Kingdom \\
  \texttt{cbs31@cam.ac.uk} \\ 
}
\begin{document}

\maketitle

\begin{abstract}
We introduce \ourmethod{} --  a dynamical systems inspired framework for graph representation learning that governs node feature evolution through a learned energy landscape and its associated descent dynamics. At the core of our approach is a learnable Lyapunov function over node embeddings, whose gradient defines an energy-reducing direction that guarantees convergence and stability. To enhance flexibility while preserving the benefits of energy-based dynamics, we incorporate a novel tangential component, learned via message passing, that evolves features while maintaining the energy value.
This decomposition into orthogonal flows of energy gradient descent and tangential evolution yields a flexible form of graph dynamics, and enables effective signal propagation even in flat or ill-conditioned energy regions, that often appear in graph learning. Our method mitigates oversquashing and is compatible with different graph neural network backbones. Empirically, \ourmethod{} achieves strong performance across a diverse set of node and graph classification and regression benchmarks, demonstrating the effectiveness of jointly learned energy functions and tangential flows for graph neural networks.
\end{abstract}

\section{Introduction}
\label{sec:intro}
Graph Neural Networks (GNNs) have achieved remarkable success in learning representations for graph-structured data \citep{bronstein2021geometric}, but they face fundamental challenges when scaling depth or modeling long-range interactions, such as vanishing gradients \citep{arroyo2025vanishing}, over-smoothing \citep{nt2019revisiting, cai2020note, rusch2023survey}, and over-squashing \cite{alon2021on,topping2022understanding,diGiovanniOversquashing,gravina_adgn,gravina_swan}. To address these issues, recent works have drawn connections between GNNs and dynamical systems or control theory to understand and mitigate these issues \cite{poli2019gnnode,chamberlain2021grand, eliasof2021pde,gravina_adgn,arroyo2025vanishing}. For example, treating a GNN as a continuous dynamical system (or \emph{neural ODE}) opens the door to analyzing stability through the lens of diffusion \citep{chamberlain2021grand}, energy conservation \citep{rusch2022graph},  antisymmetric dynamics \cite{gravina_adgn}, and Hamiltonian flows \citep{gravina_phdgn}. In parallel, physics-informed neural architectures have shown that embedding physical priors such as energy conservation or dissipation into neural models can dramatically improve stability and interpretability  \cite{bhattoo2022lgnn, gao2022physics, brandstetter2022message}. The common theme in the aforementioned works is the reliance on the existence of \emph{some} energy functional that is minimized or preserved by the GNN parameterization, which is often relatively simple, such as the Dirichlet energy \citep{rusch2023survey}. 

At the same time, it is well-established in bioinformatics and computational chemistry that different, and more complex, energy functions are necessary to accurately model various natural processes. For instance, in protein folding, the energy landscape is often rugged and multi-funnel-shaped, reflecting the presence of multiple stable conformations and transition pathways \citep{wolynes2005recent}. Similarly, in computational chemistry, modeling complex chemical reactions and molecular interactions requires sophisticated potential energy surfaces \citep{senn2009qm}.

In recent years, there has been a growing body of work on \emph{energy-based models} (EBMs) in deep learning. These models aim to learn an energy function that captures the probability distribution of data, such as images or molecules, primarily for generative modeling purposes \citep{lecun2006tutorial, xie2016theory, du2019implicit, guo2023egc}. In contrast, our work focuses on learning a \emph{downstream task-driven energy} that models the energy landscape whose minimization corresponds to the solution of a downstream task, such as graph or node classification, rather than focusing on generative modeling.

These insights motivate a fundamental question: \emph{How can we learn a task-driven energy function, and how can it be effectively leveraged within a GNN architecture to guide representation dynamics?} Unlike energy-based generative models, where the energy function encodes data likelihood, our focus is on learning an energy landscape whose minimization corresponds to solving a downstream task, such as node or graph classification. 

 To address these questions, we propose to decompose feature evolution into two orthogonal components: (i) a \emph{gradient descent} direction that minimizes the learned energy, and (ii) a \emph{tangential direction} that evolves along its level sets, preserving energy. This structured decomposition yields a principled framework that promotes stability, enhances interpretability, and mitigates issues such as oversquashing.

\textbf{Our Approach.} We introduce \ourmethod{}, a framework for constrained graph dynamics that incorporates a learnable Lyapunov energy function into the message-passing process. The learned energy governs representation updates through the two complementary flows described above: (1) an \textit{energy descent component}, which ensures convergence toward task-relevant solutions, and (2) a \textit{tangential, conservative component}, which retains flexibility while preserving energy. As illustrated in Figure~\ref{fig:lyapunov-dynamics}, the descent direction (green) lowers the energy, while the tangential direction (blue) moves along energy level sets. Their combination (orange) defines the full update step. This structured decomposition allows \ourmethod{} to propagate information effectively while maintaining controlled and stable feature dynamics.

\begin{figure}[t]
  \centering
  \includegraphics[width=0.70\linewidth]{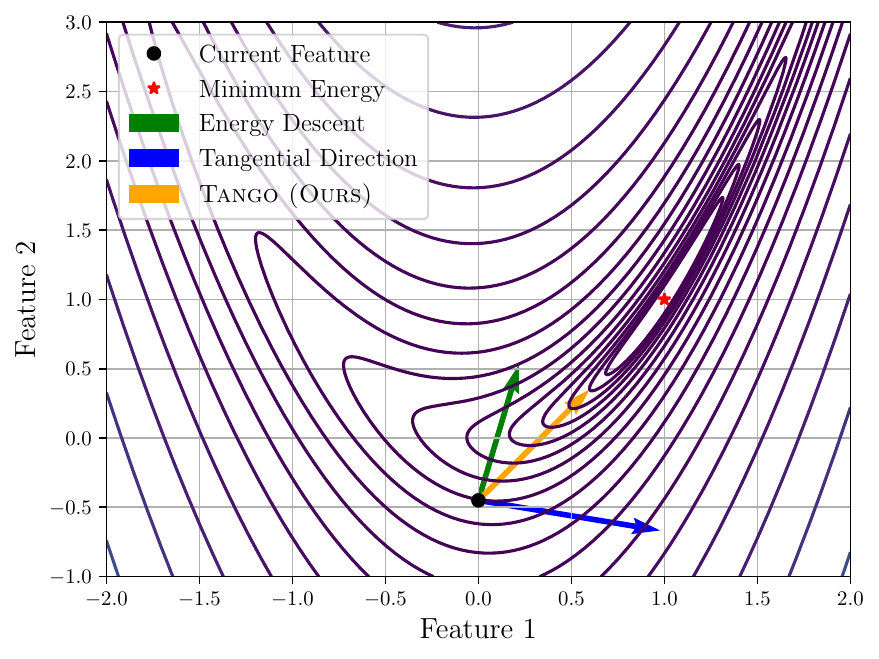}
  \caption{
Illustration of \ourmethod{} dynamics in a 2D feature space. We plot the level sets of a learned energy function and visualize the \textcolor{ForestGreen}{energy descent direction (green)}, the \textcolor{RoyalBlue}{learned tangential direction (blue)}, and their \textcolor{CustomOrange}{combined vector (orange)}. The tangential component enables movement along level sets, while the descent component reduces energy. Together, they allow for more effective navigation of the learned energy landscape.}
  \label{fig:lyapunov-dynamics}
\end{figure}

\clearpage
\textbf{Main Contributions.} Our contributions are as follows:
\begin{enumerate}[leftmargin=1.5em]
\item \textbf{Lyapunov-inspired Graph Neural Dynamics.} We introduce \ourmethod{}, a novel framework for graph representation learning that decomposes feature evolution into energy descent and tangential components, both parameterized by GNNs.
\item \textbf{Theoretical Guarantees.} We prove that, under mild assumptions, \ourmethod{} satisfies Lyapunov conditions, ensuring stable dynamics. Additionally, we show that the tangential component helps mitigate oversquashing by enabling expressive yet controlled propagation.
\item \textbf{Strong Empirical Performance.} We evaluate \ourmethod{} on a range of graph learning benchmarks, demonstrating performance competitive with or surpassing strong and widely-used baselines.
\end{enumerate}

\section{Mathematical Background}
\label{sec:background}

In this section, we provide a brief overview of Lyapunov stability theory, based on the classical treatment in \citet{khalil2002nonlinear}, which underpins the design of our \ourmethod. This theory originates from control systems and differential equations, offering a principled way to assess whether trajectories of a dynamical system remain bounded and converge over time.  

\textbf{Continuous Dynamical Systems.} 
Let $\bfh(t) \in \mathbb{R}^{d}$ denote the state of a dynamical system at time $t \geq 0$, and consider a  first-order ODE:
\begin{equation}
    \frac{d\bfh(t)}{dt} = F(\bfh(t)), \label{eq:dyn_sys}
\end{equation}
where $F: \mathbb{R}^{d} \to \mathbb{R}^{d}$ is a continuous vector field. A point $\bfh^*$ is called an \emph{equilibrium} if $F(\bfh^*) = 0$.

\begin{definition}[Lyapunov Function]
Let $\bfh^* \in \mathbb{R}^{d}$ be an equilibrium of the system in \Cref{eq:dyn_sys}. A continuously differentiable function $V: \mathbb{R}^{d} \to \mathbb{R}$ is called a \emph{Lyapunov function} around $\bfh^*$ if:
\begin{enumerate}
    \item $V(\bfh) \ge 0$ for all $\bfh$ in a neighborhood of $\bfh^*$, and $V(\bfh^*) = 0$;
    \item $\frac{d}{dt}V(\bfh(t)) = \nabla_\bfh V(\bfh(t))^\top F(\bfh(t)) \le 0$ in that neighborhood.
\end{enumerate}
\end{definition}

The first condition ensures that $V$ is lower-bounded by 0, i.e., that value of the Lyapunov function, sometimes also referred to as \emph{energy} is non-negative,  and the second that $V$ does not increase along trajectories of the system.

We now recall a classical \citep{khalil2002nonlinear} stability criterion for the dynamical system in \Cref{eq:dyn_sys}, based on the definition of a Lyapunov function, which we will later use to characterize the stability of our approach in \Cref{sec:theoretical-analysis}.
\begin{theorem}[Lyapunov Stability]
Let $\bfh^*$ be an equilibrium of \Cref{eq:dyn_sys} and let $V$ be a Lyapunov function in a neighborhood $\mathcal{N}$ of $\bfh^*$. If $\frac{d}{dt}V(\bfh(t)) \le 0$ in $\mathcal{N}$, then $\bfh^*$ is \emph{Lyapunov stable}.
\label{thm:lyapunov}
\end{theorem}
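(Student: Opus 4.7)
The plan is to prove local stability by the classical $\varepsilon$–$\delta$ trapping argument. Fix an arbitrary $\varepsilon > 0$, chosen small enough that the closed ball $\overline{B_{\varepsilon}}(\bfh^*)$ lies inside the neighborhood $\mathcal{N}$ on which $V$ is defined and $\frac{d}{dt}V(\bfh(t)) \le 0$. The goal is to produce $\delta(\varepsilon) > 0$ such that every solution $\bfh(t)$ of \Cref{eq:dyn_sys} with $\|\bfh(0) - \bfh^*\| < \delta$ satisfies $\|\bfh(t) - \bfh^*\| < \varepsilon$ for all $t \ge 0$, which is exactly Lyapunov stability.

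The key is to construct a positive ``barrier'' level of the energy. I would set $m = \min\{V(\bfh) : \|\bfh - \bfh^*\| = \varepsilon\}$; since the sphere is compact and $V$ is continuous, this minimum is attained, and since $V$ vanishes only at $\bfh^*$ (reading condition (1) in its standard positive-definite sense), we have $m > 0$. Next, by continuity of $V$ together with $V(\bfh^*) = 0$, I pick $\delta \in (0,\varepsilon)$ so that $V(\bfh) < m$ whenever $\|\bfh - \bfh^*\| < \delta$. For any trajectory starting in $B_\delta(\bfh^*)$ we then have $V(\bfh(0)) < m$, and condition (2) yields $V(\bfh(t)) \le V(\bfh(0)) < m$ as long as the trajectory remains in $\mathcal{N}$. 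If the trajectory first reached the sphere $\|\bfh - \bfh^*\| = \varepsilon$ at some time $t^\star > 0$, then by definition of $m$ we would have $V(\bfh(t^\star)) \ge m$, contradicting the strict inequality just derived. Hence the trajectory is trapped in $B_\varepsilon(\bfh^*)$ for all $t \ge 0$.

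The main (and essentially only) obstacle is a subtlety in condition (1): as literally stated, it only asks $V(\bfh) \ge 0$, whereas the barrier $m$ must be strictly positive for the trapping argument to close. I would handle this by invoking the standard Khalil reading of the Lyapunov definition, under which $V$ is positive definite on a deleted neighborhood of $\bfh^*$; alternatively one can restrict attention to a small sublevel set $\{\bfh \in \mathcal{N} : V(\bfh) \le c\}$ for some $c > 0$ contained in $B_\varepsilon(\bfh^*)$, which is forward-invariant by monotonicity of $V$ and yields the same conclusion. Apart from this point, everything reduces to continuity of $V$ and the monotonicity of $V$ along the flow, so no further machinery is needed.
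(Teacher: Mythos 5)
Your proof is the standard $\varepsilon$--$\delta$ trapping argument, which is precisely the classical proof that the paper defers to by citing \citet{khalil2002nonlinear}; the paper itself offers no proof of this theorem, so your argument is in effect the intended one. It is correct, and you are right to flag the weakness in condition (1): as literally stated ($V \ge 0$ with $V(\bfh^*) = 0$), the barrier level $m$ need not be positive (e.g.\ $V \equiv 0$ satisfies both conditions yet yields no conclusion), so positive definiteness of $V$ on a deleted neighborhood is genuinely required --- your first fix (adopting the standard Khalil definition) is the correct one, whereas the sublevel-set alternative still implicitly needs positive definiteness for a level $c > 0$ with $\{V \le c\} \subset B_{\varepsilon}(\bfh^*)$ to exist at all.
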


\section{Method}
\label{sec:method}

As discussed in \Cref{sec:intro}, our goal is to learn a task-driven energy function, and to devise a principled way to utilize it towards improved downstream performance for graph learning tasks, based on a combination of \underline{\textsc{Tan}}gential- and \underline{\textsc{G}}radient-steps \underline{\textsc{O}}ptimization of node features. We therefore call our method \ourmethod{}. In \Cref{sec:overview_method}, we outline the blueprint of  \ourmethod{}. In \Cref{sec:lyapunov_param}, we discuss implementation details. Later, in \Cref{sec:theoretical-analysis}, we discuss the properties of our \ourmethod{}, and in \Cref{app:runtimes_complexity} we discuss its complexity.

\textbf{Notations.} We consider a graph $\mathcal{G} = (\mathcal{V}, \mathcal{E})$ with $n = |\mathcal{V}|$ nodes and $m = |\mathcal{E}|$ edges. Let $\bfH(t) = [\bfh_1(t), \bfh_2(t), \dots, \bfh_n(t)]^\top \in \mathbb{R}^{n \times d}$ denote the matrix of node features at continuous time $t$, where $\bfh_v(t) \in \mathbb{R}^d$ is the state of node $v$ at time $t$.  Following the literature of GNNs based on dynamical systems \citep{eliasof2021pde, gravina_adgn, arroyo2025vanishing}, when considering a discrete architecture with a finite number of layers, we draw an analogy between time $t$ and network depth $\ell$. Henceforth, we will interchangeably use the terms $\bfH(t)$ and $\bfH^{(\ell)}$ to denote node features at a certain time or layer of the network, depending on the context.

\subsection{Optimizing Features with Energy Tangential and Gradient Steps}
\label{sec:overview_method}
Our \ourmethod{} concept is based on a dynamical system that, given a graph energy function $V_{\cal{G}}$, considers two steps: (i) \emph{energy gradient descent} and (ii) \emph{tangential direction} flows, that evolve the node features:
\begin{equation}
\frac{d\bfH(t)}{dt} = \underbrace{-\alpha_{\mathcal{G}}(\bfH(t)) \nabla_{\bfH} V_{\mathcal{G}}(\bfH(t))}_{\text{Energy Gradient Descent}} + \underbrace{\beta_{\mathcal{G}}(\bfH(t))\, T_{V_{\mathcal{G}}}(\bf   H(t))}_{\text{Tangential Direction}},
\label{eq:lyapunov_gnn_ode}
\end{equation}

where $\alpha_{\mathcal{G}}, \beta_{\mathcal{G}}$ are non-negative scalars that balance the two steps, $\nabla_{\bfH}V_{\mathcal{G}}(\bfH(t))$ is the energy gradient with respect to node features $\bfH(t)$, and $T_{V_{\mathcal{G}}}(\bfH(t))$ is an update direction that is orthogonal, i.e, tangential to the energy gradient. We note that, while in general, there are many possible directions that are orthogonal to the gradient, in \Cref{sec:lyapunov_param} we specify a procedure for learning this direction. In particular, we note that, by design, the first step decreases the energy, while the second is a tangential flow that preserves energy. Below, we formalize the tangential component and provide implementation details in Sections~\ref{sec:lyapunov_param}.

\textbf{Tangential Flow.} Setting $\beta_{\mathcal{G}}=0$ in \Cref{eq:lyapunov_gnn_ode} yields a standard energy gradient flow applied to the features.  While it guarantees energy dissipation, it may suffer from slow convergence \citep{boyd2004convex, nw} and restricted dynamics during training.  As discussed in \Cref{sec:intro}, while a gradient flow is commonly used in generative applications, accompanied by hundreds or thousands of steps are, this approach is not suitable for downstream learning, as it renders a neural network with equivalently many effective layers, that is hard to train \citep{peng2024beyond} and high computational costs. To address this, and to accelerate the minimization of the energy function, we introduce a \emph{tangential} flow that evolves tangentially to the gradient of $V_{\mathcal{G}}$, preserving energy. As we illustrate in \Cref{fig:lyapunov-dynamics}, and later theoretically discuss in \Cref{sec:theoretical-analysis}, while the tangential flow itself maintains the same energy level, its combination with the energy gradient descent step, as shown in \Cref{eq:lyapunov_gnn_ode}, can offer a better overall descent direction, thereby accelerating energy convergence. 

In order to obtain a direction that is orthogonal to $\nabla_{\bfH}V_{\mathcal{G}}(\bfH(t))$,  let $\bfM(\bfH(t))$ be a predicted update direction of the node features. We then define the \emph{tangential} node feature update direction as:
\begin{equation}
T_{V_{\cal{G}}}(\bfH(t)) = \bfM(\bfH(t)) - \left\langle {\bfM}(\bfH(t)), \widehat{\nabla}_{\bfH} V_{\mathcal{G}}(\bfH(t)) \right\rangle \cdot \nabla_{\bfH} V_{\mathcal{G}}(\bfH(t)),
\label{eq:tangential_flow}
\end{equation}
where $\widehat{\nabla}_{\bfH} V_{\mathcal{G}}(\bfH(t))$ is the normalized energy gradient. Unless ${\nabla}_{\bfH} V_{\mathcal{G}}(\bfH(t))=0$, where then we define $T_{V_{\cal{G}}}(\bfH(t))=\bfM(\bfH(t))$, the projection in \Cref{eq:tangential_flow}  removes shared the component of $\bfM(\bfH(t))$ with the energy descent direction, ensuring $T_{V_{\cal{G}}}$ is orthogonal to the gradient of the energy function $V_{\mathcal{G}}(\bfH(t))$. 

\subsection{\ourmethod{}  Graph Neural Networks}
\label{sec:lyapunov_param}
In \Cref{sec:overview_method}, we described the concept of \ourmethod{} and its underlying continuous dynamical system. To materialize this concept and obtain a GNN, we discretize \Cref{eq:lyapunov_gnn_ode} using the commonly used in GNNs \citep{gravina_adgn, eliasof2021pde, chamberlain2021grand, arroyo2025vanishing, choi2022gread}, forward Euler approach to obtain the following graph neural layer:
\begin{equation}
\bfH^{(\ell+1)} = \bfH^{(\ell)} + \epsilon  \left( -\alpha_{\mathcal{G}}(\bfH^{(\ell)})\, \nabla_{\bfH} V_{\mathcal{G}}(\bfH^{(\ell)}) + \beta_{\mathcal{G}}(\bfH^{(\ell)})\, T_{V_{\mathcal{G}}}(\bfH^{(\ell)})\right),
\label{eq:euler_update}
\end{equation}
for $\ell = 0, \dots, L-1$, where $\epsilon > 0$ is a hyperparameter step size that stems from the forward Euler discretization, $\nabla_{\bfH} V_{\mathcal{G}}(\bfH^{(\ell)})$ is the gradient of the energy function defined in \Cref{eq:energy_NN}. The coefficients $\alpha_{\mathcal{G}}\geq0, \ \beta_{\mathcal{G}} $ are scalars that balance the energy descent and tangential terms, and are also predicted by the respective GNNs, as discussed below. 

\textbf{Energy Function.} We now describe the implementation of the  function $V_{\mathcal{G}}$. Given features $\bfH^{(\ell)}$, we apply:
\begin{equation}
\label{eq:intermediate_energy_features}
\tilde{\bfH}^{(\ell)} = \sigma\left( \textsc{EnergyGNN}(\bfH^{(\ell)}; \mathcal{G}) \right) \in \mathbb{R}^{n \times d},
\end{equation}
where $\textsc{EnergyGNN}$ is a graph neural network (e.g., GatedGCN~\citep{gatedgcn}, GPS~\citep{rampasek2022graphgps}), and $\sigma$ is a pointwise nonlinearity. We then compute per-node energy scores using a multilayer perceptron (MLP):
\begin{equation}
\tilde{V}_{\mathcal{G}}(\tilde{\bfH}^{(\ell)}) = \text{MLP}_{\text{E}}(\tilde{\bfH}^{(\ell)}) \in \mathbb{R}^{n \times 1},
\end{equation}
and define the overall graph energy scalar value as:
\begin{equation}
\label{eq:energy_NN}
V_{\mathcal{G}}({\bfH}^{(\ell)}) =  \frac{1}{n} \sum_{v \in \mathcal{V}} \tilde{V}_{\mathcal{G}}(\tilde{\bfH}^{(\ell)})_v^2  \in \mathbb{R}_{\geq 0}.
\end{equation}

In addition, we employ a global sum pooling \citep{xu2019how} to $\tilde{\bfH}^{(\ell)}$, followed by an MLP and sigmoid activation, to obtain a bounded non-negative scalar $\alpha_{\cal{G}}$, as follows:
\begin{equation}
    \label{eq:alpha}
    \alpha_{\cal{G}}(\bfH^{(l)}) = \textsc{Sigmoid}\left(\text{MLP}_{{\alpha}}\left(\textsc{SumPool}(\tilde{\bfH}^{(\ell)})\right)\right) \in [0,1]
\end{equation}
We note that non-negativity is required for a valid gradient descent to be obtained in \Cref{eq:euler_update}, and the bounded value is chosen to maintain stable training.

\textbf{Tangential Update.} To compute the tangential update $T_{V_{\mathcal{G}}}(\bfH^{(\ell)})$, we learn a dedicated GNN denoted by $\textsc{TangentGNN}$. Specifically, given input features $\bfH^{(\ell)}$, we predict a node feature update direction:
\begin{equation}
\label{eq:output_tan_MPNN}
\bfM^{(\ell)} = \sigma\left( \textsc{TangentGNN}(\bfH^{(\ell)}; \mathcal{G}) \right),
\end{equation}

and define the energy-tangential component via orthogonal projection, as described in \Cref{eq:tangential_flow}. Also, we define the scalar $\beta_{\mathcal{G}}$ that scales the tangential term, as follows:
\begin{equation}
    \label{eq:beta}
    \beta_{\cal{G}}(\bfH^{(l)}) = \text{MLP}_{{\beta}}\left(\textsc{SumPool}({\bfM}^{(\ell)})\right) \in \mathbb{R}.
\end{equation}

\section{Theoretical Properties of \ourmethod}
\label{sec:theoretical-analysis}

We now analyze the continuous-time dynamics of \ourmethod{} as defined in Equation~\eqref{eq:lyapunov_gnn_ode}. Our analysis focuses on three aspects: \emph{energy dissipation}, \emph{feature evolution in flat energy lanscapes}, and  \emph{the benefit of the tangent direction}. Proofs are provided in \Cref{app:proofs}.

\textbf{Assumptions and Notations.} Throughout this analysis, we assume that : (i) the input graph $\mathcal{G} = (\mathcal{V}, \mathcal{E})$ is connected; (ii) the energy function $V_{\mathcal{G}}(\bfH(t))$ is twice differentiable and bounded from below. For simplicity of notation, throughout this section we omit the time or layer scripts, and use the term $\bfH$ to denote node features, when possible.
   
We start by showing that \ourmethod{} is dissipative if $\|\nabla_{\bfH} V_{\mathcal{G}}(\bfH)\|^2  > 0$, and $\alpha_{\cal{G}}\geq 0$ (obtained by design),  corresponding to the Lyapunov stability criterion from \Cref{thm:lyapunov}.
\begin{proposition}[Energy Dissipation]
\label{prop:energy-dissipation}
Suppose $\alpha_{\mathcal{G}} \ge 0$ and $\|\nabla_{\bfH} V_{\mathcal{G}}(\bfH)\|^2  > 0$. Then the energy $V_{\mathcal{G}}(\bfH)$ is non-increasing along trajectories of Equation~\eqref{eq:lyapunov_gnn_ode}. Specifically,
\begin{eqnarray}
\frac{d}{dt} V_{\mathcal{G}}(\bfH) 
&=& -\alpha_{\mathcal{G}}(\bfH) \left\| \nabla_{\bfH} V_{\mathcal{G}}(\bfH) \right\|^2 
+ \beta_{\mathcal{G}}(\bfH) {\left\langle  T_{V_{\cal{G}}}(\bfH) , \nabla_{\bfH} V_{\mathcal{G}}(\bfH) \right\rangle} \nonumber \\
&=& -\alpha_{\mathcal{G}}(\bfH) \left\| \nabla_{\bfH} V_{\mathcal{G}}(\bfH) \right\|^2 
\le 0.
\label{eq:energy_dissipation}
\end{eqnarray}

\end{proposition}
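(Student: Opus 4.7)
The plan is to differentiate $V_{\mathcal{G}}(\bfH(t))$ along trajectories via the chain rule, substitute the governing ODE from \Cref{eq:lyapunov_gnn_ode}, and then invoke the orthogonality built into $T_{V_{\mathcal{G}}}$ to eliminate the tangential contribution to the derivative.

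Concretely, I would first apply the chain rule --- legal because $V_{\mathcal{G}}$ is assumed twice differentiable --- to write
\begin{equation*}
\frac{d}{dt} V_{\mathcal{G}}(\bfH(t)) = \left\langle \nabla_{\bfH} V_{\mathcal{G}}(\bfH(t)),\, \frac{d\bfH(t)}{dt} \right\rangle.
\end{equation*}
Substituting the right-hand side of \Cref{eq:lyapunov_gnn_ode} and expanding by bilinearity of the inner product immediately produces the two-term expression stated in the proposition: a $-\alpha_{\mathcal{G}}(\bfH)\|\nabla_{\bfH}V_{\mathcal{G}}(\bfH)\|^2$ contribution from the gradient-descent piece and a $\beta_{\mathcal{G}}(\bfH)\langle T_{V_{\mathcal{G}}}(\bfH),\, \nabla_{\bfH}V_{\mathcal{G}}(\bfH)\rangle$ contribution from the tangential piece.

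The key step is then to show that this tangential inner product vanishes. Under the hypothesis $\|\nabla_{\bfH}V_{\mathcal{G}}(\bfH)\|^2 > 0$, the normalized gradient $\widehat{\nabla}_{\bfH}V_{\mathcal{G}}(\bfH)$ is well-defined, and the construction of $T_{V_{\mathcal{G}}}$ in \Cref{eq:tangential_flow} is precisely a projection of $\bfM(\bfH)$ onto the orthogonal complement of the energy gradient. A one-line calculation --- pair the right-hand side of \Cref{eq:tangential_flow} with $\nabla_{\bfH}V_{\mathcal{G}}(\bfH)$ and cancel --- gives $\langle T_{V_{\mathcal{G}}}(\bfH),\, \nabla_{\bfH}V_{\mathcal{G}}(\bfH)\rangle = 0$. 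Combined with $\alpha_{\mathcal{G}}(\bfH) \ge 0$, which holds by construction via the sigmoid in \Cref{eq:alpha}, this collapses the derivative to $-\alpha_{\mathcal{G}}(\bfH)\|\nabla_{\bfH}V_{\mathcal{G}}(\bfH)\|^2 \le 0$, as claimed.

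I do not anticipate a genuine obstacle: the statement is essentially a one-line chain-rule computation once the orthogonality of $T_{V_{\mathcal{G}}}$ with the energy gradient is in hand, and that orthogonality is wired directly into \Cref{eq:tangential_flow}. The only mild point of care is the normalization convention --- verifying that the scalar coefficient multiplying the gradient in \Cref{eq:tangential_flow} is consistent with the use of the normalized gradient inside the inner product, so that exact orthogonality with $\nabla_{\bfH}V_{\mathcal{G}}$ (and not merely with $\widehat{\nabla}_{\bfH}V_{\mathcal{G}}$) results. The assumption $\|\nabla_{\bfH}V_{\mathcal{G}}(\bfH)\|^2 > 0$ is precisely what keeps this normalization, and hence the projection, well-defined along the trajectory.
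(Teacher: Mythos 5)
Your proposal follows essentially the same route as the paper's proof: differentiate $V_{\mathcal{G}}$ by the chain rule, substitute the dynamics of \Cref{eq:lyapunov_gnn_ode}, kill the tangential term via orthogonality, and use $\alpha_{\mathcal{G}}\ge 0$ for the sign. The one "mild point of care" you flag is in fact a genuine discrepancy that the paper's proof glosses over by asserting orthogonality "by design": as literally written, \Cref{eq:tangential_flow} subtracts $\left\langle \bfM, \widehat{\nabla}_{\bfH} V_{\mathcal{G}}\right\rangle\cdot\nabla_{\bfH} V_{\mathcal{G}}$ rather than $\left\langle \bfM, \widehat{\nabla}_{\bfH} V_{\mathcal{G}}\right\rangle\cdot\widehat{\nabla}_{\bfH} V_{\mathcal{G}}$, so the one-line pairing you propose actually yields $\left\langle T_{V_{\mathcal{G}}}, \nabla_{\bfH} V_{\mathcal{G}}\right\rangle = \left\langle \bfM, \nabla_{\bfH} V_{\mathcal{G}}\right\rangle\left(1-\left\|\nabla_{\bfH} V_{\mathcal{G}}\right\|\right)$, which vanishes only if the gradient has unit norm; your insistence on making the normalization convention consistent (normalize both occurrences, or equivalently divide by $\left\|\nabla_{\bfH} V_{\mathcal{G}}\right\|^2$) is the more careful reading and is needed for the proposition to hold as stated.
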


We now show that unlike gradient flows, our \ourmethod{} admits evolution of node features in flat energy landscapes, a prime challenge in optimization techniques \citep{nw, boyd2004convex}.
\begin{proposition}[\ourmethod{} can Evolve Features in Flat Energy Landscapes]
    Suppose $\nabla_{\bfH} V_{\mathcal{G}}(\bfH) = 0$, and  $ T_{V_{\cal{G}}}(\bfH) \neq 0$, then the \ourmethod{} flow in \Cref{eq:lyapunov_gnn_ode} reads: $$\frac{d\bfH}{dt} = \beta_{\cal{G}}(\bfH) T_{V_{\cal{G}}}(\bfH).$$ This implies that in \emph{contrast} to gradient flows,  the dynamics of \ourmethod{} can evolve even in regions where the energy landscape is flat. 
\end{proposition}

\textbf{Theoretical Benefits of Using the Tangent Direction.}
Our \ourmethod{} combines two terms as shown in \Cref{eq:lyapunov_gnn_ode} and its discretization in \Cref{eq:euler_update}. These are the energy gradient $\nabla_{\bfH} V_{\mathcal{G}}(\bfH^{(\ell)})$ and the tangential direction vector $T_{V_{\cal{G}}}(\bfH)$. 
A natural theoretical and practical question is: \emph{under what conditions does the inclusion of the tangential direction improve over simple gradient descent?} To address this question, we begin by recalling a classic convergence result for gradient-based minimization.

\begin{proposition}[Convergence of Gradient Descent of a Scalar Function, \citet{nw}]
    Let $V_{\cal{G}}(\cdot)$ be a scalar function and let $\bfH^{(\ell+1)} = \bfH^{(\ell)} - \alpha_{\cal{G}}^{(\ell)}(\bfH^{(\ell)}){\nabla}_{\bfH} V_{\mathcal{G}}(\bfH^{(l)})$ be a gradient-descent iteration of the energy $V_{\cal{G}}(\cdot)$.
    Then, a linear convergence is obtained, with convergence rate:
    \[
    r = \frac{\lambda_{{\max}} - \lambda_{{\min}}}{\lambda_{{\max}} + \lambda_{{\min}}},
    \]
    where $\lambda_{{\max}}$ is the maximal eigenvalue, and in the case of problems that involve the graph Laplacian, $\lambda_{{\min}}$ is the second minimal eigenvalue, i.e., the first non-zero eigenvalue of the Hessian of $V_{\cal{G}}(\cdot)$.
    \label{prop:convergence_gradflow}
\end{proposition}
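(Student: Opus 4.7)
The plan is to reduce the problem to its quadratic model near a stationary point and analyze the resulting linear recursion. First, I would expand $V_{\mathcal{G}}$ to second order around a minimizer $\bfH^*$, obtaining $V_{\mathcal{G}}(\bfH) \approx V_{\mathcal{G}}(\bfH^*) + \tfrac{1}{2}(\bfH - \bfH^*)^\top \bfA (\bfH - \bfH^*)$, where $\bfA$ is the Hessian at $\bfH^*$, symmetric and positive semidefinite by second-order optimality. Under this local quadratic model, $\nabla_{\bfH} V_{\mathcal{G}}(\bfH) = \bfA (\bfH - \bfH^*)$, so the iteration in the statement becomes the linear recursion $\bfH^{(\ell+1)} - \bfH^* = (\bfI - \alpha_{\mathcal{G}}^{(\ell)} \bfA)(\bfH^{(\ell)} - \bfH^*)$.

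Second, I would diagonalize $\bfA = \bfU \bfLambda \bfU^\top$ and rotate the error into the eigenbasis. Letting $\bfe^{(\ell)} = \bfU^\top (\bfH^{(\ell)} - \bfH^*)$, each coordinate evolves independently as $e_i^{(\ell+1)} = (1 - \alpha_{\mathcal{G}}^{(\ell)} \lambda_i)\, e_i^{(\ell)}$, so the worst-case per-step contraction is $\max_i |1 - \alpha \lambda_i|$. Minimizing this max-modulus over scalar $\alpha$ is the classical Chebyshev-on-two-points problem, with minimizer $\alpha^{\star} = \tfrac{2}{\lambda_{\max} + \lambda_{\min}}$ and optimal value exactly the claimed rate $r = \tfrac{\lambda_{\max} - \lambda_{\min}}{\lambda_{\max} + \lambda_{\min}}$.

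Third, I would address the graph Laplacian case, where $\bfA$ typically inherits a zero eigenvalue corresponding to constant-on-nodes signals. I would split $\bfH - \bfH^*$ into its null-space component, which is preserved by the iteration (and fixed by the problem's normalization or boundary conditions), and its orthogonal complement, on which the smallest active eigenvalue is the Fiedler value $\lambda_2 > 0$ by connectivity of $\mathcal{G}$, which is one of the standing assumptions of \Cref{sec:theoretical-analysis}. Linear convergence with rate $r$ (computed using $\lambda_{\min} = \lambda_2$) then holds on this orthogonal subspace, matching the statement.

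The main obstacle is not the linear recursion, which is textbook, but justifying the quadratic reduction in our setting: namely, showing that the higher-order remainder does not spoil the asymptotic linear rate, and accommodating the iterate-dependent step size $\alpha_{\mathcal{G}}^{(\ell)}(\bfH^{(\ell)})$ produced by \Cref{eq:alpha}. Since $\alpha_{\mathcal{G}}^{(\ell)} \in [0,1]$ by construction, it suffices to argue that whenever $\alpha_{\mathcal{G}}^{(\ell)} \in (0, 2/\lambda_{\max})$ the per-step contraction is bounded above by $r$; together with a Taylor remainder that is $o(\|\bfH^{(\ell)} - \bfH^*\|)$, this yields the claimed linear convergence near $\bfH^*$.
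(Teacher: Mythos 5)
The paper does not actually prove this proposition: it is stated as a classical result and attributed to \citet{nw}, and the appendix merely restates it without a proof environment. Your sketch reconstructs the standard textbook argument behind that citation --- quadratic model at the minimizer, diagonalization of the Hessian, optimal fixed step $\alpha^\star = 2/(\lambda_{\max}+\lambda_{\min})$ yielding the rate $r$, and a separate treatment of the Laplacian null space via the Fiedler value --- and this is the right argument; it is consistent with what the cited reference establishes (there via exact line search and the Kantorovich inequality, which gives the same rate in the energy norm).

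One concrete slip in your final paragraph: you claim that whenever $\alpha_{\mathcal{G}}^{(\ell)} \in (0, 2/\lambda_{\max})$ the per-step contraction is bounded \emph{above} by $r$. The inequality goes the other way. For a generic step size in that interval the contraction factor is $\max\bigl(|1-\alpha\lambda_{\min}|,\,|1-\alpha\lambda_{\max}|\bigr)$, which is minimized (and equal to $r$) only at $\alpha^\star$; for any other admissible $\alpha$ it is strictly larger than $r$, though still below $1$, so you get linear convergence at a possibly worse rate. Since the proposition asserts the rate $r$ itself, your argument needs the step size to be (asymptotically) the optimal one, or to invoke exact line search as in \citet{nw}; the iterate-dependent $\alpha_{\mathcal{G}}^{(\ell)}$ of \Cref{eq:alpha} does not automatically deliver this. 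This does not affect the paper's use of the proposition, which only needs the qualitative fact that the rate degrades as $\lambda_{\max}/\lambda_{\min}$ grows, but it should be corrected if you intend the sketch as a self-contained proof.
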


\Cref{prop:convergence_gradflow} shows that gradient descent suffers in ill-conditioned problems, i.e., when the ratio between the $\lambda_{\max}$ and $\lambda_{\min}$ is large. This is common in graph-based tasks, where the Hessian may inherit poor conditioning from the graph Laplacian, particularly when oversquashing occurs due to bottlenecks in the graph \cite{topping2022understanding, giraldo2022tradeoff, diGiovanniOversquashing}.

As an alternative, consider the effect of adding an orthogonal flow to the gradient descent direction. In this case, the combined update direction is 
\begin{equation}
\bfD = \alpha_{\cal{G}}(\bfH^{(\ell)}) \nabla_{\bfH} V_{\mathcal{G}}(\bfH^{(\ell)}) + \beta_{\cal{G}}(\bfH^{(\ell)})T_{V_{\mathcal{G}}}(\bfH^{(\ell)}).
\end{equation}

The following proposition demonstrates that it is possible to learn $T$ such that $\bfD$ becomes the Newton direction, which offers quadratic convergence \cite{nw}. 

\begin{proposition}[\ourmethod{} can learn a Quadratic Convergence Direction] 
\label{prop:accelerated_direction}
    Assume for simplicity that $\beta_{\cal{G}}=1$, and that the Hessian of $V_{\cal{G}}$ is invertible. Let $\bfD = \alpha_{\cal{G}}(\bfH^{(\ell)}) \nabla_{\bfH} V_{\mathcal{G}}(\bfH^{(\ell)}) + T_{V_{\mathcal{G}}}(\bfH^{(\ell)})$ with $\left\langle T_{V_{\mathcal{G}}}(\bfH^{(\ell)}), \widehat{\nabla}_{\bfH} V_{\mathcal{G}}(\bfH^{(\ell)}) \right\rangle = 0$. Then, it is possible to learn a direction $T_{V_{\mathcal{G}}}(\bfH^{(\ell)})$ and a step size $\alpha_{\cal{G}}$ such that $\bfD$ is the Newton direction, $\bfN = (\nabla^2V_{\cal{G}})^{-1} \nabla V_{\cal{G}}$.
\end{proposition}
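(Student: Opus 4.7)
The plan is to prove this by an explicit orthogonal decomposition of the Newton direction $\bfN = (\nabla^2 V_{\mathcal{G}})^{-1} \nabla_{\bfH} V_{\mathcal{G}}(\bfH^{(\ell)})$ with respect to the gradient $\nabla_{\bfH} V_{\mathcal{G}}(\bfH^{(\ell)})$. Indeed, any vector in $\mathbb{R}^{n \times d}$ admits a unique decomposition into a component parallel to $\nabla_{\bfH} V_{\mathcal{G}}$ and a component orthogonal to it, provided $\nabla_{\bfH} V_{\mathcal{G}} \neq 0$ (otherwise the Newton direction is already zero and the claim is trivial).

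First, I would set
\[
\alpha^{*} \;=\; \frac{\langle \bfN,\ \nabla_{\bfH} V_{\mathcal{G}}(\bfH^{(\ell)}) \rangle}{\|\nabla_{\bfH} V_{\mathcal{G}}(\bfH^{(\ell)})\|^{2}}, \qquad T^{*} \;=\; \bfN \;-\; \alpha^{*}\,\nabla_{\bfH} V_{\mathcal{G}}(\bfH^{(\ell)}).
\]
A direct computation gives $\langle T^{*}, \nabla_{\bfH} V_{\mathcal{G}}(\bfH^{(\ell)}) \rangle = 0$, which is equivalent to $\langle T^{*}, \widehat{\nabla}_{\bfH} V_{\mathcal{G}}(\bfH^{(\ell)}) \rangle = 0$, so the orthogonality constraint of the proposition is satisfied. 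By construction $\alpha^{*}\,\nabla_{\bfH} V_{\mathcal{G}}(\bfH^{(\ell)}) + T^{*} = \bfN$, giving $\bfD = \bfN$ as claimed.

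It then remains to argue that $\alpha^{*}$ and $T^{*}$ are realizable by the parameterization of \ourmethod{}. For $T^{*}$, I would observe that the orthogonal projection step in \Cref{eq:tangential_flow} acts as the identity on any input already orthogonal to $\nabla_{\bfH} V_{\mathcal{G}}$; hence picking \textsc{TangentGNN} so that $\bfM^{(\ell)} = T^{*}$ (which is possible under standard universal-approximation assumptions on the chosen backbone) yields $T_{V_{\mathcal{G}}}(\bfH^{(\ell)}) = T^{*}$ exactly. The scalar $\alpha^{*}$, being a single number, is realizable by $\text{MLP}_{\alpha}$ of sufficient capacity in \Cref{eq:alpha}.

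The main obstacle I anticipate is the non-negativity of $\alpha^{*}$, which is implicitly enforced by the sigmoid in \Cref{eq:alpha}. Since $\alpha^{*} = \langle (\nabla^{2} V_{\mathcal{G}})^{-1} \nabla_{\bfH} V_{\mathcal{G}},\ \nabla_{\bfH} V_{\mathcal{G}} \rangle / \|\nabla_{\bfH} V_{\mathcal{G}}\|^{2}$, positivity holds precisely when $(\nabla^{2} V_{\mathcal{G}})^{-1}$ is positive definite, i.e., when $V_{\mathcal{G}}$ is locally convex at $\bfH^{(\ell)}$. I would therefore add local convexity at the current iterate as a mild extra assumption — a natural regime near a strict local minimum, where Newton's method is meant to be applied — under which the construction above delivers the claimed direction, completing the proof.
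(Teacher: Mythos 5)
Your proof is correct and follows essentially the same route as the paper's: both decompose the Newton direction $\bfN$ into its component along $\nabla_{\bfH} V_{\mathcal{G}}$, arriving at the same $\alpha^{*} = \langle \bfN, \nabla_{\bfH} V_{\mathcal{G}}\rangle / \|\nabla_{\bfH} V_{\mathcal{G}}\|^{2}$ and taking the orthogonal residual as the tangential direction. Your added observations---that realizability of $T^{*}$ rests on a universal-approximation assumption for the backbone, and that the sigmoid range of $\alpha_{\mathcal{G}}$ in \Cref{eq:alpha} forces a local positive-definiteness assumption on the Hessian---address gaps the paper's proof passes over silently.
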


In addition to its improved global convergence, Newton's method is notable for its local convergence rate behavior, being independent of the condition number of the Hessian \citep{nw, boyd2004convex}. This implies that if the tangential flow is learned to approximate Newton direction, \ourmethod{} can overcome the slow convergence caused by highly ill-conditioned energy landscapes, as commonly observed in different second order optimization techniques and their approximations, such as conjugate gradients (CG) and LBFGS \citep{nw,boyd2004convex}.
\emph{In the context of graph learning}, \Cref{prop:accelerated_direction} is particularly relevant when considering the oversquashing problem \citep{alon2021on,diGiovanniOversquashing}.  Oversquashing leads to poor conditioning; the graph Laplacian has a smallest eigenvalue of zero (for connected graphs), and the second smallest eigenvalue is also close to zero \cite{topping2022understanding,giraldo2022tradeoff, black2023understanding, jamadandi2024spectral}. Under these conditions, gradient flow methods, which are implicitly implemented by common GNN formulations \citep{digiovanni2023gradientflows}, perform poorly due to their ill-conditioned energy landscape, limiting the ability of propagating information between nodes. 

By enabling feature updates that can approximate second-order information, our \ourmethod{} provides a mechanism to actively mitigate oversquashing effects. 
We empirically validate our theoretical results in \Cref{fig:signal_comparison}, where we compare our method with a Dirichlet energy minimization process, which is often implemented by baseline GNNs \citep{rusch2023survey, digiovanni2023gradientflows}, with more details described in \Cref{app:experimental_details}, and further evaluate the effectiveness of our \ourmethod{} across oversquashing-related benchmarks in \Cref{sec:experiments}.

\begin{figure}[t]
    \centering
    \begin{subfigure}[t]{0.32\textwidth}
        \centering
    \includegraphics[width=\linewidth]{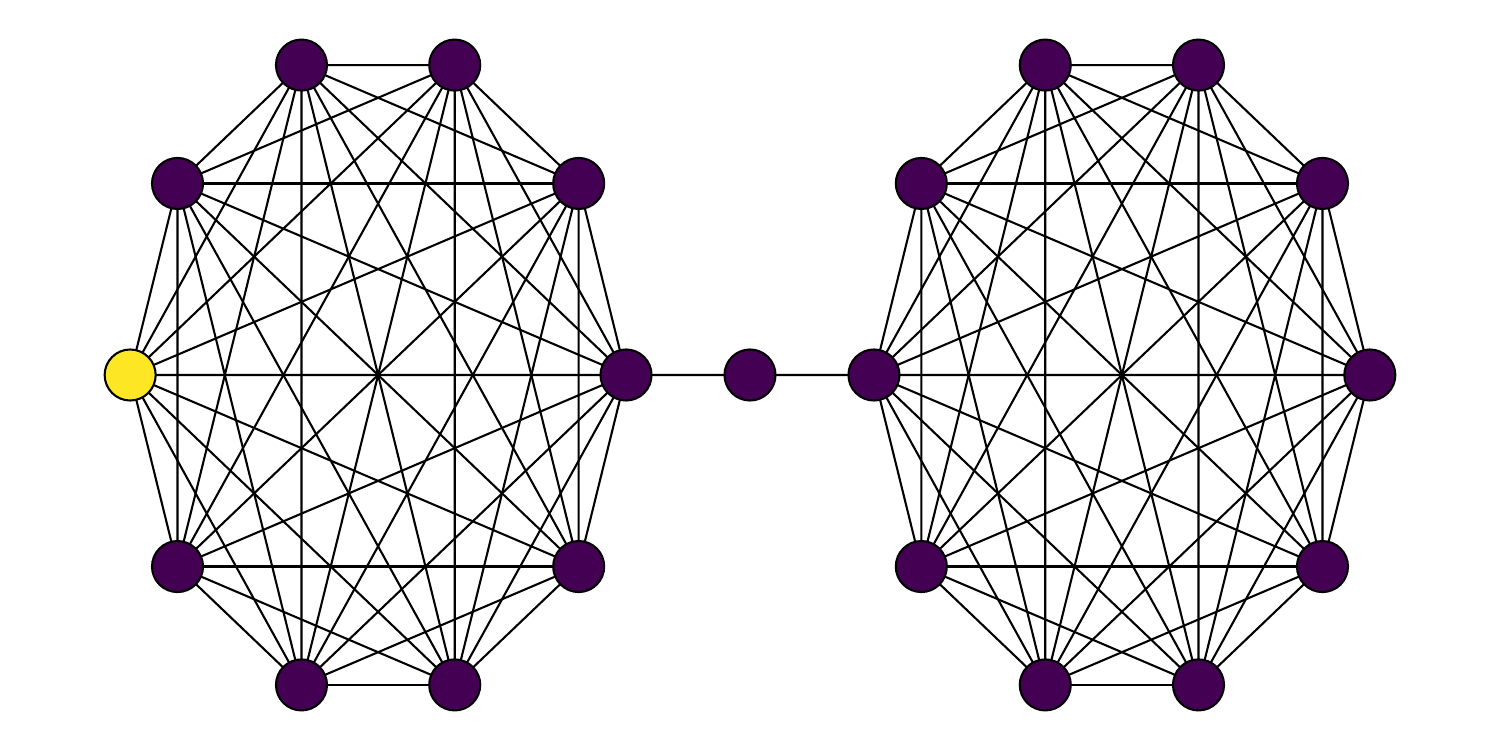}
        \caption{Initial Features}
        \label{fig:target}
    \end{subfigure}
    \hfill
    \begin{subfigure}[t]{0.32\textwidth}
        \centering
        \includegraphics[width=\linewidth]{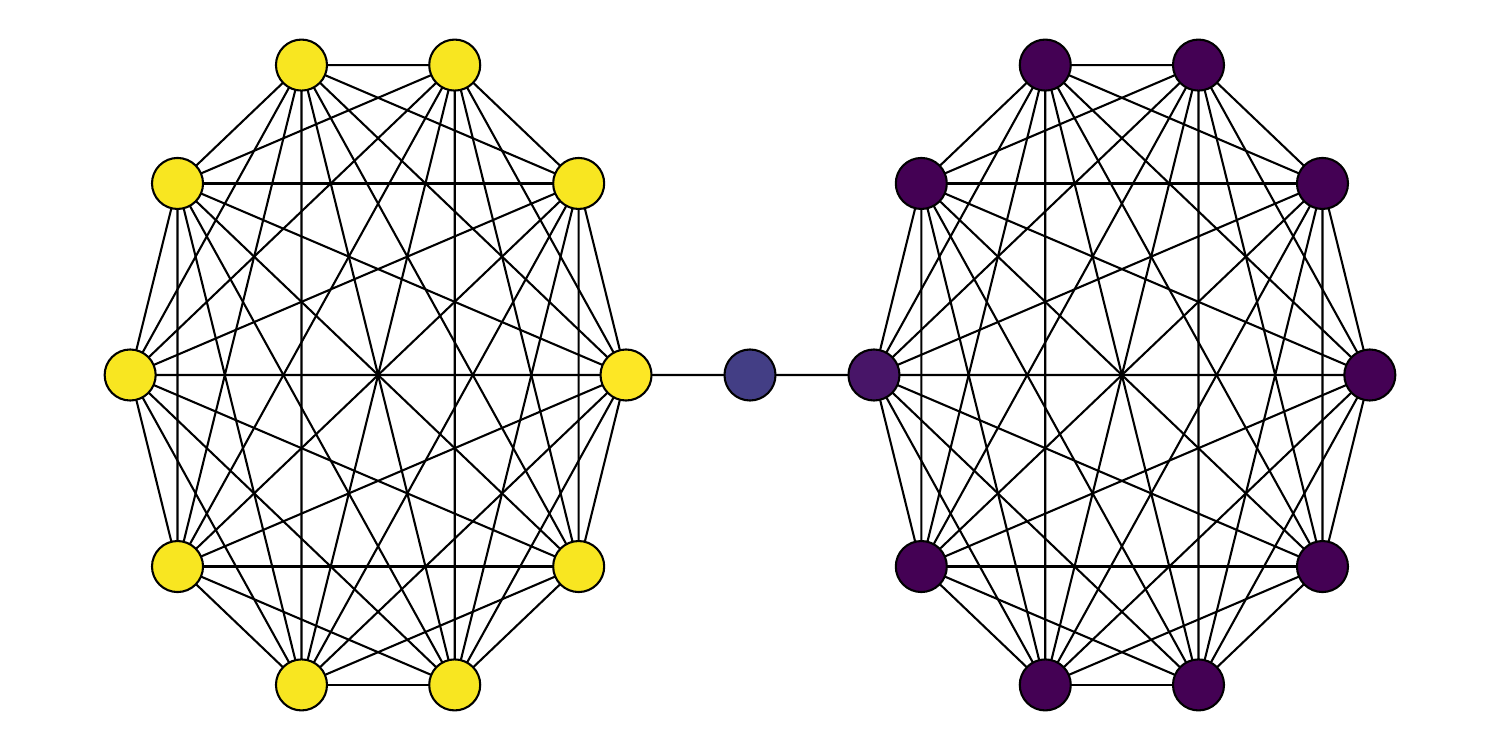}
        \caption{Gradient Flow}
        \label{fig:gradient}
    \end{subfigure}
    \hfill
    \begin{subfigure}[t]{0.32\textwidth}
        \centering
        \includegraphics[width=\linewidth]{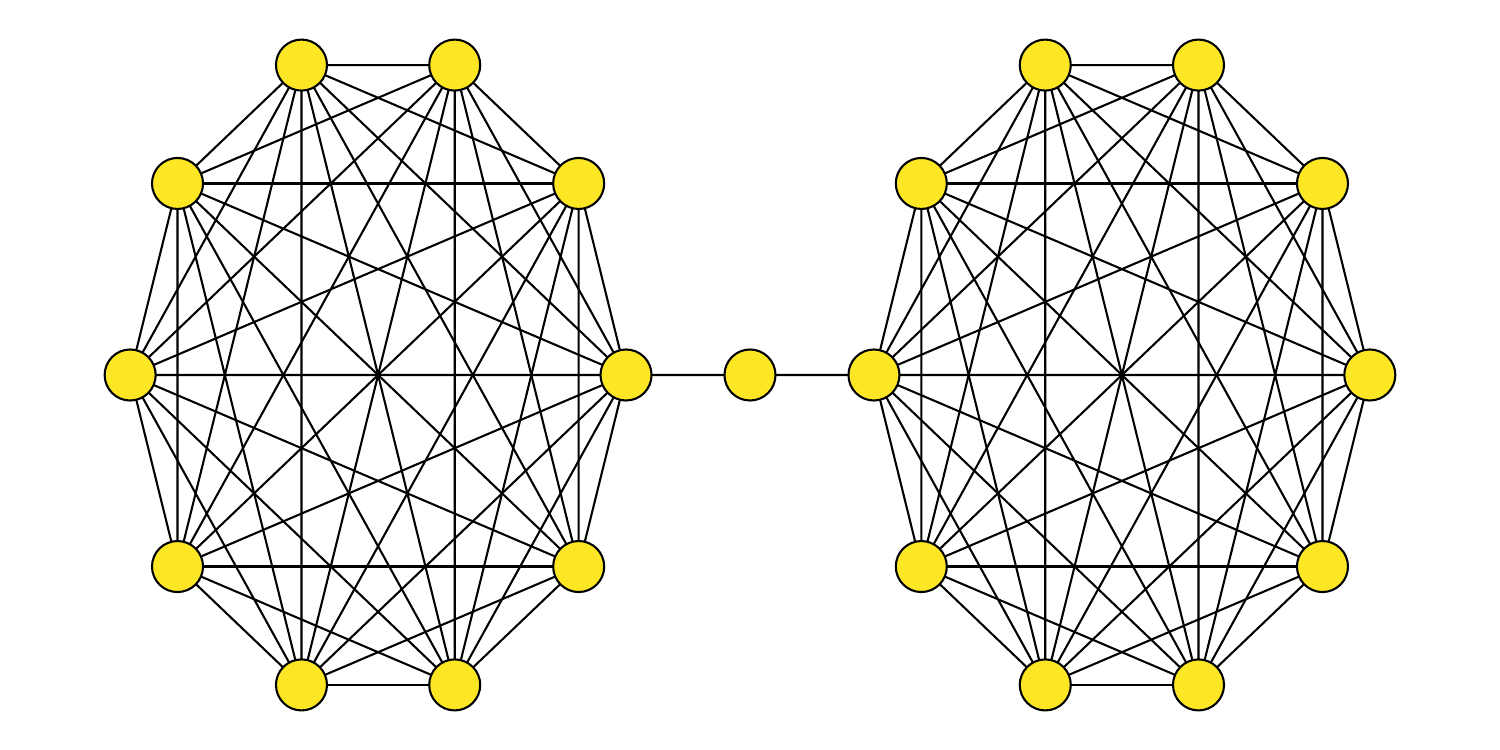}
        \caption{\ourmethod{}}
        \label{fig:lyapunov}
    \end{subfigure}
    \caption{Comparison of propagation behaviors between gradient flow and \ourmethod{} with 50 layers. While gradient flow struggles propagating information through the bottleneck, our \ourmethod{} is effective.}
    \label{fig:signal_comparison}
\end{figure}

\section{Experiments}
\label{sec:experiments}
We evaluate the performance of our \ourmethod{} on a suite of benchmarks: (i) synthetic benchmarks that require the exchange of messages with large distances, called graph property prediction from \citet{gravina_adgn}, in \Cref{sec:exp_gpp}; (ii) the peptides long-range graph benchmark \citep{LRGB} in Section~\ref{sec:exp_lrgb}; (iii) GNN benchmarks form \citep{dwivedi2023benchmarking} including the ZINC-12k, MNIST, CIFAR-10, PATTERN, and CLUSTER datasets; and (iv) the heterophilic node classification datasets from \citet{platonov2023a}. 
Notably, \ourmethod{} shows consistent downstream performance improvements over its baseline models, and it offers competitive performance compared with other popular and state-of-the-art methods, such as MPNN-based models, DE-GNNs, higher-order DGNs, and graph transformers. In \Cref{app:experimental_details} we provide experimental details on the hyperparameters, benchmark evaluation, and runtimes.  Additional results and comparisons, as well as evaluation on heterophilic node classification and an ablation study, are provided in \Cref{app:additional_results}.

\subsection{Graph Property Prediction}\label{sec:exp_gpp}

\begin{wraptable}{r}{0.525\textwidth}
\scriptsize
  \centering
  \vspace{-1.75em}  
  \setlength{\tabcolsep}{1pt}
\centering
\caption{Mean test set {\small$log_{10}(\mathrm{MSE})$}($\downarrow$) and std averaged on 4 random weight initializations on Graph Property Prediction. Lower is better. 
\one{First}, \two{second}, and \three{third} best results for each task are color-coded.
}
\label{tab:results_gpp}
\begin{tabular}{lccc}
\hline\toprule
\textbf{Model} &\textbf{Diameter} & \textbf{SSSP} & \textbf{Eccentricity} \\\midrule
\textbf{MPNNs} \\
$\,$ GatedGCN \citep{gatedgcn}& 0.1348$_{\pm 0.0397}$ & -3.2610$_{\pm 0.0514}$ & 0.6995$_{\pm 0.0302}$ \\
$\,$ GCN \citep{Kipf2016}            & 0.7424$_{\pm0.0466}$ & 0.9499$_{\pm0.0001}$ & 0.8468$_{\pm0.0028}$ \\
$\,$ GAT \citep{veličković2018graph}           & 0.8221$_{\pm0.0752}$ & 0.6951$_{\pm0.1499}$           & 0.7909$_{\pm0.0222}$  \\
$\,$ GraphSAGE \citep{SAGE}     & 0.8645$_{\pm0.0401}$ & 0.2863$_{\pm0.1843}$           &  0.7863$_{\pm0.0207}$\\
$\,$ GIN      \citep{xu2019how}      & 0.6131$_{\pm0.0990}$ & -0.5408$_{\pm0.4193}$          & 0.9504$_{\pm0.0007}$\\
$\,$  GCNII  \citep{gcnii}        & 0.5287$_{\pm0.0570}$ & -1.1329$_{\pm0.0135}$          & 0.7640$_{\pm0.0355}$\\
\midrule
\textbf{DE-GNNs} \\
$\,$ DGC  \citep{poli}          & 0.6028$_{\pm0.0050}$ & -0.1483$_{\pm0.0231}$          & 0.8261$_{\pm0.0032}$\\
$\,$ GRAND  \citep{chamberlain2021grand}        & 0.6715$_{\pm0.0490}$ & -0.0942$_{\pm0.3897}$          & 0.6602$_{\pm0.1393}$ \\
$\,$ GraphCON  \citep{rusch2022graph}     & 0.0964$_{\pm0.0620}$ & -1.3836$_{\pm0.0092}$ & 0.6833$_{\pm0.0074}$\\
$\,$ A-DGN \citep{gravina_adgn} 
& {-0.5188$_{\pm0.1812}$} & 
-3.2417$_{\pm0.0751}$ & {0.4296$_{\pm0.1003}$}  \\

$\,$ {SWAN} \citep{gravina_swan} & \two{-0.5981$_{\pm0.1145}$}  & {-3.5425$_{\pm0.0830}$}  & \three{-0.0739$_{\pm0.2190}$} \\
$\,$ {PH-DGN} \citep{gravina_phdgn}  & \three{-0.5385$_{\pm0.0187}$} &  \two{-4.2993$_{\pm0.0721 }$} & \two{-0.9348$_{\pm0.2097}$} \\
\midrule
\textbf{Transformers} \\
$\,$ GPS \citep{rampasek2022graphgps} & -0.5121$_{\pm0.0426}$ &  \three{-3.5990$_{\pm0.1949}$}  & 0.6077$_{\pm0.0282}$\\
\midrule
\textbf{Ours} \\
$\,$ \ourmethod$_{\textsc{GatedGCN}}$ &  {-0.6681$_{\pm0.0745}$} &  -5.0626$_{\pm0.0742}$ &  -1.7419$_{\pm0.0106}$
\\ 
$\,$ \ourmethod$_{\textsc{GPS}}$ & {\one{-0.9772$_{\pm0.0518}$}}   & \one{-5.5263$_{\pm0.0838}$} & \one{-2.1455$_{\pm0.0033}$}
\\ 

\bottomrule\hline      
\end{tabular}
  \vspace{-1em}
\end{wraptable}

\textbf{Setup.} We consider the three graph property prediction tasks from \citet{gravina_adgn}, 
evaluating the performance of \ourmethod{} in predicting graph diameters, single source shortest paths (SSSP), 
and node eccentricity on synthetic graphs. To effectively address these tasks, it is essential to propagate information not only from direct neighbors but also from distant nodes within the graph. As a result, strong performance in these tasks mirrors the ability to facilitate long-range interactions. 
\textbf{Results.}
Table \ref{tab:results_gpp} reports the mean test $log_{10}(\mathrm{MSE})$, comparing our \ourmethod{} with various MPNNs, DE-GNNs, and transformer-based models. The results highlight that \ourmethod{}, in all variants, consistently achieves the lowest (best) error across all tasks, demonstrating its efficacy compared with existing methods. For example, in the Eccentricity task, \ourmethod{}$_{\textsc{GPS}}$ reduces the error score by over 1.2 points compared to PH-DGN \citep{gravina_phdgn} and by over 2.0 points compared to SWAN, which are models designed to propagate information over long radii effectively. 
Overall, these results validate the effectiveness of our \ourmethod{} in modeling long-range interactions and mitigating oversquashing.
Furthermore, \ourmethod{} strengthens the performance of simple MPNN backbones like GatedGCN. For example, GatedGCN augmented with our \ourmethod{}  consistently delivers better results than the baseline GatedGCN, highlighting its ability to enhance traditional MPNNs. This demonstrates that our method can effectively leverage the strengths of simple models while overcoming their limitations in long-range propagation.

\subsection{GNN Benchmarking from \citet{dwivedi2023benchmarking}}\label{app:additional_benchmarks}
\textbf{Setup.} To further evaluate the performance of our \ourmethod{}, we consider multiple GNN from \citet{dwivedi2023benchmarking}, that include the \textit{ZINC-12k} dataset, MNIST and CIFAR-10 superpixels datasets, and CLUSTER and PATTERN datasets. These datasets are commonly used to evaluate state-of-the-art techniques \citep{ma2023graph}. For a fair and direct comparison with other methods, we follow the training and evaluation protocols from \citet{dwivedi2023benchmarking}.

\textbf{Results.} Table~\ref{tab:exp_bmgnn} reports the average and standard deviation of the obtained test metric. Besides ZINC-12k, which is a regression problem with mean absolute error (MAE) as the metric, all other datasets consider the accuracy(\%) metric. Our results show that across all benchmarks, our \ourmethod{} consistently improves its backbone performance, and often outperforms other  strong baselines.

\begin{table*}[t]
\setlength{\tabcolsep}{3pt}
\footnotesize
    \centering
    \caption{
    Test performance in five benchmarks from \cite{dwivedi2023benchmarking}. 
    Shown is the mean $_{\pm \text{std}}$ of 4 runs with different random seeds. Highlighted are the top \one{first}, \two{second}, and \three{third} results. }
    {\small
    \begin{tabular}{lccccc}
    \toprule
       \textbf{Model}  &  \textbf{ZINC-12k} & \textbf{MNIST} & \textbf{CIFAR10} & \textbf{PATTERN} & \textbf{ CLUSTER} \\
       \cmidrule{2-6} 
       & {MAE}$\downarrow$  &  {Accuracy}$\uparrow$ & {Accuracy}$\uparrow$ & {Accuracy}$\uparrow$ & {Accuracy}$\uparrow$ \\
       \midrule
GCN \citep{Kipf2016}  & 0.367$_{\pm 0.011}$ & 90.705$_{\pm 0.218}$ & 55.710$_{\pm 0.381}$ & 71.892$_{\pm 0.334}$ & 68.498$_{\pm 0.976}$ \\
GIN \citep{xu2019how}  & 0.526$_{\pm 0.051}$ & 96.485$_{\pm 0.252}$ & 55.255$_{\pm 1.527}$ & 85.387$_{\pm 0.136}$ & 64.716$_{\pm 1.553}$ \\
GAT \citep{veličković2018graph} & 0.384$_{\pm 0.007}$ & 95.535$_{\pm 0.205}$ & 64.223$_{\pm 0.455}$ & 78.271$_{\pm 0.186}$ & 70.587$_{\pm 0.447}$ \\
GatedGCN \citep{gatedgcn} & 0.282$_{\pm 0.015}$ & 97.340$_{\pm 0.143}$ & 67.312$_{\pm 0.311}$ & 85.568$_{\pm 0.088}$ & 73.840$_{\pm 0.326}$ \\
PNA \citep{PNA} & 0.188$_{\pm 0.004}$ & 97.94$_{\pm 0.12}$ & 70.35$_{\pm 0.63}$ & $-$ & $-$ \\
DGN  \citep{beaini2021directional} & 0.168$_{\pm 0.003}$ & $-$ & \three{72.838$_{\pm 0.417}$} & 86.680$_{\pm 0.034}$ & $-$ \\
\midrule 
CRaW1 \citep{tonnshoff2023walking} & 0.085$_{\pm 0.004}$ & 97.944$_{\pm 0.050}$ & 69.013$_{\pm 0.259}$ & $-$ & $-$ \\
GIN-AK+ \citep{zhao2022from} & 0.080$_{\pm 0.001}$ & $-$ & 72.19$_{\pm 0.13}$ & \three{86.850$_{\pm 0.057}$} & $-$ \\
\midrule SAN \citep{kreuzer2021rethinking} & 0.139$_{\pm 0.006}$ & $-$ & $-$ & 86.581$_{\pm 0.037}$ & 76.691$_{\pm 0.65}$ \\
EGT \citep{hussain2022global} & 0.108$_{\pm 0.009}$ & \two{98.173$_{\pm 0.087}$} & 68.702$_{\pm 0.409}$ & 86.821$_{\pm 0.020}$ & \three{79.232$_{\pm 0.348}$} \\
Graphormer-GD \citep{zhang2023rethinking} & 0.081$_{\pm 0.009}$ & $-$ & $-$ & $-$ & $-$ \\
GPS \citep{rampasek2022graphgps} & \three{0.070$_{\pm 0.004}$} & {98.051$_{\pm 0.126}$} & {72.298$_{\pm 0.356}$} & 86.685$_{\pm 0.059}$ & {78.016$_{\pm 0.180}$} \\
GRIT \citep{ma2023graph} & \one{0.059$_{\pm 0.002}$} & \three{98.108$_{\pm 0.111}$} & \one{76.468$_{\pm 0.881}$} & \one{87.196$_{\pm 0.076}$} & \two{80.026$_{\pm 0.277}$} \\
\midrule
\ourmethod{}\textsubscript{GatedGCN} & 0.128$_{\pm 0.011}$ & 97.788$_{\pm 0.105}$ & 70.894$_{\pm 0.329}$ & 86.672$_{\pm 0.071}$ & 78.194$_{\pm 0.307}$ \\
\ourmethod{}\textsubscript{GPS} & \two{0.062$_{\pm 0.005}$} & \one{98.197$_{\pm 0.110}$} & \two{75.783$_{\pm 0.261}$} &  \two{87.182$_{\pm 0.063}$} & \one{80.113$_{\pm 0.138}$} \\
       \bottomrule
    \end{tabular}
    }\\
    \label{tab:exp_bmgnn}
\end{table*}

\begin{wraptable}{r}{0.45\textwidth}
  \centering
  \scriptsize
    \setlength{\tabcolsep}{2pt}

  \vspace{-2em}  
    \centering
    \caption{Results for Peptides-func and Peptides-struct (3 training seeds).  The \one{first}, \two{second}, and \three{third} best scores are colored.
    }\label{tab:results_lrgb}
    
    \begin{tabular}{@{}lcc@{}}
    \hline\toprule
    \multirow{2}{*}{\textbf{Model}} & \textbf{Peptides-func}  & \textbf{Peptides-struct}              
    \\
   
    & \scriptsize{AP $\uparrow$} & \scriptsize{MAE $\downarrow$}                            
    \\ \midrule  
    \textbf{MPNNs} \\
    $\,$ GCN \citep{Kipf2016}& 59.30$_{\pm0.23}$ & 0.3496$_{\pm0.0013}$ \\ 
    $\,$ GINE \citep{dwivedi2023benchmarking}                      & 54.98$_{\pm0.79}$ & 0.3547$_{\pm0.0045}$ \\
    $\,$ GCNII \citep{gcnii}                     & 55.43$_{\pm0.78}$ & 0.3471$_{\pm0.0010}$ \\ 
    $\,$ GatedGCN   \citep{gatedgcn}                & 58.64$_{\pm0.77}$ & 0.3420$_{\pm0.0013}$ \\ 
    \midrule
    \textbf{Multi-hop GNNs}\\
    $\,$ DIGL+MPNN+LapPE  \citep{DIGL}    & 68.30$_{\pm0.26}$         & 0.2616$_{\pm0.0018}$ \\ 
    $\,$ MixHop-GCN+LapPE \citep{abu2019mixhop}    & 68.43$_{\pm0.49}$         & 0.2614$_{\pm0.0023}$ \\ 
    $\,$ DRew-GCN+LapPE \citep{drew}            & \one{71.50$_{\pm0.44}$}   & 0.2536$_{\pm0.0015}$ \\ 
    \midrule
    
    \textbf{Transformers} \\
    $\,$ Transformer+LapPE \citep{dwivedi2023benchmarking} & 63.26$_{\pm1.26}$ & 0.2529$_{\pm0.0016}$           \\ 
    $\,$ SAN+LapPE  \citep{kreuzer2021rethinking}        & {63.84$_{\pm1.21}$} & 0.2683$_{\pm0.0043}$           \\
    $\,$ GPS+LapPE \citep{rampasek2022graphgps}   & 65.35$_{\pm0.41}$ & 0.2500$_{\pm0.0005}$   \\ 
    \midrule
    \textbf{DE-GNNs} \\
    $\,$ GRAND    \citep{chamberlain2021grand}  & 57.89$_{\pm0.62}$ & 0.3418$_{\pm0.0015}$ \\ 
    $\,$ GraphCON \citep{rusch2022graph}  & 60.22$_{\pm0.68}$ & 0.2778$_{\pm0.0018}$ \\
    $\,$ A-DGN   \citep{gravina_adgn}   & 59.75$_{\pm0.44}$ & 0.2874$_{\pm0.0021}$ \\
    $\,$ {SWAN} \citep{gravina_swan}& 67.51$_{\pm0.39}$ & \three{0.2485$_{\pm0.0009}$}\\
    $\,$ {PH-DGN} \citep{gravina_phdgn}
    & \three{70.12$_{\pm0.45}$} & \two{0.2465$_{\pm0.0020}$}\\
    \midrule
    \textbf{Ours} \\
    $\,$ \ourmethod$_{\textsc{GatedGCN}}$ & 68.92$_{\pm0.40}$ & 0.2451$_{\pm0.0006}$  \\
        $\,$ \ourmethod$_{\textsc{GPS}}$ & \two{70.21$_{\pm0.43}$}  & \one{0.2422$_{\pm0.0014}$}  \\

    \bottomrule\hline
    \end{tabular}
  \vspace{-5em}
\end{wraptable}

\subsection{Long-Range Benchmark}
\label{sec:exp_lrgb}

\textbf{Setup.} We assess the performance of our method on the real-world long-range graph benchmark (LRGB) from \cite{LRGB}, focusing on the \textit{Peptides-func} and \textit{Peptides-struct} datasets. 
We follow the experimental setting in \cite{LRGB}, including the 500K parameter budget. All transformer baselines include positional and structural encodings.  \ourmethod{} does not use additional encodings. The datasets consist of large molecular graphs derived from peptides, 
where the structure and function of a peptide depend on interactions between distant parts of the graph. Therefore, relying on short-range interactions, such as those captured by local message passing in GNNs, 
may not be sufficient to excel at this task. 

\textbf{Results.} Table \ref{tab:results_lrgb} provides a comparison of our \ourmethod{} model with a wide range of baselines. A broader comparison 
is presented in Table \ref{tab:results_lrgb_complete}.
The results indicate that \ourmethod{} outperforms standard MPNNs, transformer-based GNNs, DE-GNNs, and most Multi-hop GNNs.

\subsection{Heterophilic Node Classification}
\label{sec:heterophilic}
\textbf{Setup.} 
We consider heterophilic node classification datasets;  \textit{Roman-empire, \textit{Amazon-ratings}, \textit{Minesweeper}, \textit{Tolokers}, and \textit{Questions}} tasks, 
to evaluate \ourmethod{} in capturing 
complex node relationships beyond simple homophily. We follow the training and evaluation protocols from \citet{platonov2023a}.

 \textbf{Results.} We report the performance of \ourmethod{} in   \Cref{app:heterophilic_results}, and compare it with several recent leading methods.  Specifically, we include baseline results from \cite{finkelshtein2024cooperative, platonov2023a, muller2024attending}.
Across all datasets, \ourmethod{} achieves competitive performance that often outperforms state-of-the-art methods, demonstrating that our \ourmethod{} can also be utilized on larger graphs and in complex heterophilic scenarios.

\section{Related Work}
\label{sec:related}
In this section we cover two main topics related to our \ourmethod{}. In \Cref{app:additional_related} we discuss additional related works.

\textbf{Deep GNNs and Dynamical Systems.} 
A growing body of work interprets GNN layers as iterative updates in a dynamical system, providing a principled framework to analyze stability, control diffusion, and inform architectural design. \citet{poli2019gnnode} introduced Graph Neural ODEs, inspired by neural ODEs~\citep{RuthottoHaber2018, chen2018neural}, modeling node feature evolution via continuous-depth ODEs aligned with graph structure, enabling adaptive computation and improved performance in dynamic settings. Similarly, \citet{xhonneux2020continuousgnn} proposed Continuous GNNs, where feature channels evolve by differential equations, mitigating over-smoothing via infinite-depth limits. Follow-up works such as GODE~\citep{zhuang2020ordinary}, GRAND~\citep{chamberlain2021grand}, PDE-GCN\textsubscript{D}~\citep{eliasof2021pde}, and DGC~\cite{DGC} view GNN layers as discrete integration steps of the heat equation to control oversmoothing~\citep{nt2019revisiting, oono2020graph, cai2020note}. Extensions like PDE-GCN\textsubscript{M}~\citep{eliasof2021pde} and GraphCON~\citep{rusch2022graph} add oscillatory components to preserve feature energy, while others leverage heat-kernel attention~\citep{pmlr-v162-choromanski22a}, anti-symmetry~\citep{gravina_adgn, gravina_swan}, reaction-diffusion~\citep{wang2022acmp, choi2022gread}, advection-reaction-diffusion~\citep{eliasof2024feature} to enhance long-range or directional flow, and higher-order graph neuro ODE models \citep{eliasof2024temporal}. A comprehensive overview is given in \citet{han2023continuous}. Closely related, \citet{digiovanni2023gradientflows} interpret GNN layer updates as gradient flows of the Dirichlet energy, aligning message passing with energy minimization. In contrast, our \ourmethod{} learns a graph-adaptive, task-specific energy and introduces a novel descent mechanism combining energy gradients with a learnable tangential component, enabling more expressive dynamics than pure gradient flows.

\textbf{Learning Energy Functions in Neural Networks.} Energy-based models (EBMs) provide a flexible framework in deep learning by learning an energy function whose low-energy regions correspond to 
areas with high probability for the data.
They have been widely used in generative tasks such as image synthesis~\citep{lecun2006tutorial, xie2016theory, du2019implicit, guo2023egc} and graph generation~\citep{liu2021graphebm, reiser2022graph}. In contrast to these typically unsupervised settings, our work focuses on learning a \emph{task-driven} energy function tailored to predictive objectives like node or graph classification. Here, inference corresponds to descending the learned energy landscape, whose minima align with correct outputs. Relatedly, Lyapunov functions—classical tools from control theory—have been used in neural networks to ensure stable learning or inference dynamics, e.g., by enforcing stability in Neural ODEs~\citep{rodriguez2022lyanet} or GNN-based controllers~\citep{fallin2025lyapunov}. However, such approaches typically assume a fixed or implicit energy function rather than learning one. Our method, \ourmethod{}, bridges and extends these perspectives by learning a graph-adaptive, task-specific energy and introducing a novel optimization scheme. Crucially, our \ourmethod{} incorporates a learnable tangential component that accelerates energy minimization and enhances performance in graph learning tasks.

\section{Conclusions}
\label{sec:conclusions}
We introduced \ourmethod{}, a novel framework for learning graph neural dynamics through the joint modeling of an energy descent direction and a tangential flow. By interpreting GNN message passing through the lens of Lyapunov theory and continuous dynamical systems, \ourmethod{} unifies task-driven energy-based modeling with flexible, learnable tangential flows, which allow for better utilization of the learned energy function by accelerating its minimization. We further show that the tangential component enables continued feature evolution in flat or ill-conditioned energy landscapes, offering a compelling advantage over traditional gradient flow approaches. We relate this property to the mitigation of oversquashing, a persistent challenge in graph learning. 
Empirically, \ourmethod{} achieves strong performance across 15 synthetic and real-world benchmarks, outperforming message-passing, diffusion-based, and attention-based GNNs.
This work opens several interesting directions for future research, including the incorporation of higher-order differential operators into the tangential flow mechanism, and an analysis and regularization techniques for the learned energy landscape.

\newpage
\clearpage

\bibliography{biblio}
\bibliographystyle{plainnat}

\newpage
\clearpage

\clearpage
\appendix

\section{Additional Related Work}
\label{app:additional_related}

\textbf{Oversquashing in Graph Learning.}
Graph neural networks (GNNs) typically operate through message-passing mechanisms, aggregating information from local neighborhoods. While effective in capturing short-range dependencies, this design often leads to \emph{oversquashing}, a phenomenon where signals from distant nodes are compressed into fixed-size representations, impeding the flow of long-range information \citep{alon2021on, diGiovanniOversquashing, topping2022understanding}. This limitation poses a challenge in domains that demand rich global context, such as bioinformatics \citep{baek2021accurate, LRGB} and heterophilic graphs \citep{luan2024heterophilicgraphlearninghandbook, snowflake}. 
A range of strategies have been proposed to mitigate oversquashing. \emph{Graph rewiring} approaches, such as SDRF \citep{topping2022understanding}, densify the graph to enhance connectivity prior to training. In contrast, methods like GRAND \citep{chamberlain2021grand}, BLEND \citep{blend}, and DRew \citep{drew} adjust the graph structure dynamically based on node features. \emph{Transformer-based models} offer another promising route by leveraging global attention to enable direct, long-range message passing. Examples include SAN \citep{Kreuzer2021}, Graphormer \citep{Ying2021}, and GPS \citep{graphgps}, which incorporate positional encodings, such as Laplacian eigenvectors \citep{dwivedi2023benchmarking} and random walk structural embeddings \citep{dwivedi2022graph} to preserve structural identity. However, the quadratic complexity of full attention in these models raises scalability concerns, motivating interest in sparse attention mechanisms \citep{Zaheer2020, Choromanski2020, Shirzad2023}. 
An alternative line of work explores \emph{non-local dynamics} to enhance expressivity without relying solely on attention. FLODE \citep{maskey2023fractional} employs fractional graph operators, QDC \citep{markovich2023qdc} uses quantum diffusion processes, and G2TN \citep{g2tn} models explicit diffusion paths to propagate information more effectively. While these approaches address the oversquashing bottleneck, they often come with increased computational demands due to dense propagation operators. For a broader overview of these techniques, see \citet{shi2023exposition}. We note that the challenge of modeling long-range dependencies also arises in other domains, such as sequential architectures  \citep{lstm,ssm}.

\paragraph{Optimization Techniques.}
The formulation of \ourmethod{} draws parallel with concepts that have been explored in the optimization literature, particularly in the design of dynamical systems that balance expressivity and convergence. While traditional gradient descent provides a robust and interpretable mechanism for minimizing energy functions, its convergence rate can be limited in poorly conditioned settings \citep{boyd2004convex, nw}, which frequently arise in graph-based problems due to structural bottlenecks \citep{alon2021on, topping2022understanding}. Second-order approaches, such as Newton's method, are known to accelerate convergence by incorporating curvature information, albeit at increased computational cost. 
The combination of energy gradient descent and a learned tangential component in \ourmethod{} suggests a learnable departure from purely first-order schemes. Rather than explicitly computing or approximating the Hessian, our framework enables the model to learn corrective update directions that are orthogonal to the descent path. This design implicitly aligns with the motivations behind quasi-Newton techniques like conjugate gradients and LBFGS \citep{nw}, which aim to improve convergence by leveraging directional information that complements the gradient. 
From this perspective, \ourmethod{} can be viewed as embedding optimization-inspired dynamics within graph learning frameworks. This is particularly relevant in scenarios affected by oversquashing \citep{diGiovanniOversquashing}, where effective feature transmission often requires departing from strictly local, gradient-driven updates. By allowing energy-preserving tangential flows, \ourmethod{} introduces flexibility reminiscent of structured optimization methods, adapted to the graph learning domain.

\section{Proofs of Theoretical Results}
\label{app:proofs}
\setcounter{proposition}{0}

In this section, we restate the theoretical results from \Cref{sec:theoretical-analysis} and provide their proofs. As in the main text, we assume the following throughout: (i) the input graph $\mathcal{G} = (\mathcal{V}, \mathcal{E})$ is connected; (ii) the energy function $V_{\mathcal{G}}(\bfH(t))$ is twice differentiable and bounded from below. For simplicity of notation, throughout this section, we omit the time or layer scripts and use the term $\bfH$ to denote node features when possible.

\begin{proposition}[Energy Dissipation]
\label{prop:energy-dissipation_app}
Suppose $\alpha_{\mathcal{G}} \ge 0$ and $\|\nabla_{\bfH} V_{\mathcal{G}}(\bfH)\|^2  > 0$. Then the energy $V_{\mathcal{G}}(\bfH)$ is non-increasing along trajectories of Equation~\eqref{eq:lyapunov_gnn_ode}. Specifically,
\begin{eqnarray}\nonumber
\frac{d}{dt} V_{\mathcal{G}}(\bfH) 
&=& -\alpha_{\mathcal{G}}(\bfH) \left\| \nabla_{\bfH} V_{\mathcal{G}}(\bfH) \right\|^2 
+ \beta_{\mathcal{G}}(\bfH) {\left\langle  T_{V_{\cal{G}}}(\bfH) , \nabla_{\bfH} V_{\mathcal{G}}(\bfH) \right\rangle} \nonumber \\ \nonumber
&=& -\alpha_{\mathcal{G}}(\bfH) \left\| \nabla_{\bfH} V_{\mathcal{G}}(\bfH) \right\|^2 
\le 0.
\label{eq:energy_dissipation_app}
\end{eqnarray}

\end{proposition}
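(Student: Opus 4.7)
The plan is to apply the chain rule to $V_{\mathcal{G}}(\bfH(t))$, substitute the governing ODE, and then invoke the orthogonality built into the tangential update to eliminate the $\beta_{\mathcal{G}}$ term. The overall structure is essentially a one-line computation, so the main work is isolating the right algebraic identity and verifying the orthogonality claim.

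First I would write
\[
\frac{d}{dt} V_{\mathcal{G}}(\bfH(t)) \;=\; \bigl\langle \nabla_{\bfH} V_{\mathcal{G}}(\bfH(t)),\ \tfrac{d}{dt}\bfH(t) \bigr\rangle,
\]
which is valid because $V_{\mathcal{G}}$ is assumed twice differentiable and $\bfH(\cdot)$ is differentiable as a solution of the ODE in \Cref{eq:lyapunov_gnn_ode}. Substituting the vector field from \Cref{eq:lyapunov_gnn_ode} into this identity and distributing the inner product yields exactly the first displayed line of the proposition, namely $-\alpha_{\mathcal{G}}(\bfH)\|\nabla_{\bfH} V_{\mathcal{G}}(\bfH)\|^2 + \beta_{\mathcal{G}}(\bfH)\langle T_{V_{\mathcal{G}}}(\bfH), \nabla_{\bfH} V_{\mathcal{G}}(\bfH)\rangle$.

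The crux is then to argue that $\langle T_{V_{\mathcal{G}}}(\bfH), \nabla_{\bfH} V_{\mathcal{G}}(\bfH)\rangle = 0$ whenever $\|\nabla_{\bfH} V_{\mathcal{G}}(\bfH)\|^2 > 0$. This is the orthogonality property built into the definition of $T_{V_{\mathcal{G}}}$ in \Cref{eq:tangential_flow}: taking the inner product of the tangential flow with $\nabla_{\bfH} V_{\mathcal{G}}$ and using that the second term in \Cref{eq:tangential_flow} is a scalar multiple of $\nabla_{\bfH} V_{\mathcal{G}}$, the projection coefficient is precisely chosen so that the inner product cancels. Once this vanishing is established, the entire $\beta_{\mathcal{G}}$ contribution drops out regardless of the sign or magnitude of $\beta_{\mathcal{G}}(\bfH)$.

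Finally, non-negativity follows from the hypothesis $\alpha_{\mathcal{G}}(\bfH) \ge 0$ together with $\|\nabla_{\bfH} V_{\mathcal{G}}(\bfH)\|^2 \ge 0$, giving $\frac{d}{dt} V_{\mathcal{G}}(\bfH) = -\alpha_{\mathcal{G}}(\bfH)\|\nabla_{\bfH} V_{\mathcal{G}}(\bfH)\|^2 \le 0$, as claimed. The main obstacle, modest as it is, is checking the orthogonality step carefully — in particular making sure the normalization convention for $\widehat{\nabla}_{\bfH} V_{\mathcal{G}}$ in \Cref{eq:tangential_flow} is such that the projection coefficient really removes the entire component of $\bfM$ along $\nabla_{\bfH} V_{\mathcal{G}}$; this is the only place where a slip could break the argument, and it is also why the hypothesis $\|\nabla_{\bfH} V_{\mathcal{G}}(\bfH)\|^2 > 0$ is needed (so that $\widehat{\nabla}_{\bfH} V_{\mathcal{G}}$ is well defined).
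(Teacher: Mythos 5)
Your proof is correct and follows essentially the same route as the paper's: chain rule, substitution of the dynamics from \Cref{eq:lyapunov_gnn_ode}, orthogonality of $T_{V_{\mathcal{G}}}$ to the gradient to eliminate the $\beta_{\mathcal{G}}$ term, and the sign of $\alpha_{\mathcal{G}}$ to conclude. Your flagged concern about the normalization convention is in fact well placed: as literally typeset, \Cref{eq:tangential_flow} takes the coefficient $\langle \bfM, \widehat{\nabla}_{\bfH} V_{\mathcal{G}}\rangle$ but subtracts along the \emph{unnormalized} $\nabla_{\bfH} V_{\mathcal{G}}$, which gives $\langle T_{V_{\mathcal{G}}}, \nabla_{\bfH} V_{\mathcal{G}}\rangle = \langle \bfM, \nabla_{\bfH} V_{\mathcal{G}}\rangle\,(1 - \|\nabla_{\bfH} V_{\mathcal{G}}\|)$ rather than zero, so exact orthogonality requires reading that last factor as $\widehat{\nabla}_{\bfH} V_{\mathcal{G}}$ as well (the paper's proof sidesteps this by simply asserting the orthogonality ``by design'').
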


\begin{proof}
By the chain rule,
\[
\frac{d}{dt} V_{\mathcal{G}}(\bfH) = \left\langle \nabla_{\bfH} V_{\mathcal{G}}(\bfH),\, \frac{d\bfH}{dt} \right\rangle.
\]
Substituting the dynamics of Equation~\eqref{eq:lyapunov_gnn_ode}:
\begin{eqnarray}
\nonumber
\frac{d}{dt} V_{\mathcal{G}}(\bfH) &=& \left\langle \nabla_{\bfH} V_{\mathcal{G}}(\bfH),\, -\alpha_{\mathcal{G}}(\bfH)\, \nabla_{\bfH} V_{\mathcal{G}}(\bfH) + \beta_{\mathcal{G}}(\bfH)\, T_{V_{\mathcal{G}}}(\bfH) \right\rangle \\
\nonumber
&=& -\alpha_{\mathcal{G}}(\bfH)\, \left\| \nabla_{\bfH} V_{\mathcal{G}}(\bfH) \right\|^2 + \beta_{\mathcal{G}}(\bfH)\, \left\langle T_{V_{\mathcal{G}}}(\bfH),\, \nabla_{\bfH} V_{\mathcal{G}}(\bfH) \right\rangle.
\end{eqnarray}
As discussed in \Cref{sec:method}, we have by design, that 
\[
\left\langle T_{V_{\mathcal{G}}}(\bfH),\, \nabla_{\bfH} V_{\mathcal{G}}(\bfH) \right\rangle = 0.
\]
Therefore,
\[
\frac{d}{dt} V_{\mathcal{G}}(\bfH) = -\alpha_{\mathcal{G}}(\bfH)\, \left\| \nabla_{\bfH} V_{\mathcal{G}}(\bfH) \right\|^2.
\]
Because $\alpha_{\mathcal{G}}(\bfH) \ge 0$ by design, the energy is non-increasing, and assuming $\alpha_{\mathcal{G}}(\bfH) > 0$, the system is dissipative, i.e., its energy is decreasing.
\end{proof}

\begin{proposition}[\ourmethod{} can Evolve Features in Flat Energy Landscapes]
    Suppose $\nabla_{\bfH} V_{\mathcal{G}}(\bfH) = 0$, and  $ T_{V_{\cal{G}}}(\bfH) \neq 0$, then the \ourmethod{} flow in \Cref{eq:lyapunov_gnn_ode} reads: $$\frac{d\bfH}{dt} = \beta_{\cal{G}}(\bfH) T_{V_{\cal{G}}}(\bfH).$$ This implies that in \emph{contrast} to gradient flows,  the dynamics of \ourmethod{} can evolve even in regions where the energy landscape is flat. 
\end{proposition}

\begin{proof}
Because $\nabla_{\bfH} V_{\mathcal{G}}(\bfH) = 0$, the first term in \Cref{eq:lyapunov_gnn_ode} vanishes, and the \ourmethod{} dynamical system reads:
\[
\frac{d\bfH}{dt}=\beta_{\mathcal{G}}(\bfH) T_{V_{\cal{G}}}(\bfH),
\]
Assuming that $ T_{V_{\cal{G}}}(\bfH) \neq 0$, \ourmethod{} can continue evolving node features also in cases where $\nabla_{\bfH} V_{\mathcal{G}}(\bfH) = 0$, i.e., where the energy landscape is flat.
\end{proof}

\begin{proposition}[Convergence of Gradient Descent of a Scalar Function, \citet{nw}]
    Let $V_{\cal{G}}(\cdot)$ be a scalar function and let $\bfH^{(\ell+1)} = \bfH^{(\ell)} - \alpha_{\cal{G}}^{(\ell)}(\bfH^{(\ell)}){\nabla}_{\bfH} V_{\mathcal{G}}(\bfH^{(l)})$ be a gradient-descent iteration of the energy $V_{\cal{G}}(\cdot)$.
    Then, a linear convergence is obtained, with convergence rate:
    \[
    r = \frac{\lambda_{{\max}} - \lambda_{{\min}}}{\lambda_{{\max}} + \lambda_{{\min}}},
    \]
    where $\lambda_{{\max}}$ is the maximal eigenvalue, and in the case of problems that involve the graph Laplacian, $\lambda_{{\min}}$ is the second minimal eigenvalue, i.e., the first non-zero eigenvalue of the Hessian of $V_{\cal{G}}(\cdot)$.
    \label{prop:convergence_gradflow_app}
\end{proposition}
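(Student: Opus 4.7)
The plan is to establish the convergence rate via the standard eigen-analysis of linear iterations, carried out in the quadratic regime where the Hessian determines local behavior. I would first localize the argument around a critical point $\bfH^{*}$ of $V_{\mathcal{G}}$ and Taylor-expand to second order, so that $V_{\mathcal{G}}(\bfH) \approx V_{\mathcal{G}}(\bfH^{*}) + \tfrac{1}{2}(\bfH - \bfH^{*})^{\top} \bfA (\bfH - \bfH^{*})$ with $\bfA := \nabla^{2} V_{\mathcal{G}}(\bfH^{*})$. Under this approximation the gradient is linear in the error $\bfe^{(\ell)} := \bfH^{(\ell)} - \bfH^{*}$, and so the iterate $\bfH^{(\ell+1)} = \bfH^{(\ell)} - \alpha_{\mathcal{G}}^{(\ell)}\nabla V_{\mathcal{G}}(\bfH^{(\ell)})$ reduces to the linear recursion $\bfe^{(\ell+1)} = (\bfI - \alpha_{\mathcal{G}}^{(\ell)} \bfA) \bfe^{(\ell)}$.

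The next step is to diagonalize $\bfA$ through its spectral decomposition $\bfA = \bfU \bfLambda \bfU^{\top}$ and express the error in the eigenbasis as $\tilde{\bfe}^{(\ell)} = \bfU^{\top} \bfe^{(\ell)}$. Each coordinate then evolves independently as $\tilde{e}_{i}^{(\ell+1)} = (1 - \alpha_{\mathcal{G}} \lambda_{i}) \tilde{e}_{i}^{(\ell)}$, so the worst-case per-iteration contraction factor over the spectrum is $\rho(\alpha_{\mathcal{G}}) = \max_{i} |1 - \alpha_{\mathcal{G}} \lambda_{i}|$. Choosing a constant step $\alpha_{\mathcal{G}}$ to minimize $\rho$ is a standard Chebyshev-style balancing problem: the optimum occurs when $|1 - \alpha_{\mathcal{G}} \lambda_{\min}| = |1 - \alpha_{\mathcal{G}} \lambda_{\max}|$, yielding $\alpha^{*} = 2/(\lambda_{\min} + \lambda_{\max})$ and the advertised linear convergence rate $r = (\lambda_{\max} - \lambda_{\min})/(\lambda_{\max} + \lambda_{\min})$.

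The final step addresses the Laplacian subtlety highlighted in the statement. For a connected graph, the combinatorial Laplacian has a one-dimensional nullspace spanned by the constant vector, so the naive $\lambda_{\min} = 0$ would give $r = 1$ and predict no contraction. I would resolve this by restricting the error analysis to the orthogonal complement of this nullspace: the component of $\bfe^{(\ell)}$ in the nullspace is invariant under the iteration (gradient descent is a no-op on it) and can be absorbed into a shifted equilibrium, while on the complement $\bfA$ acts with smallest eigenvalue equal to the algebraic connectivity, i.e.\ the first nonzero eigenvalue. Substituting this value in place of $\lambda_{\min}$ in the rate formula recovers the stated result.

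The main obstacle I anticipate is not technical but expository: making precise what is meant by ``convergence'' in the Laplacian case, since the iterate need not converge to a unique point but rather to an affine subspace, and the statement passes between a generic twice-differentiable $V_{\mathcal{G}}$ and a quadratic model implicitly. I would handle this by either quotienting by the nullspace or by assuming the problem comes equipped with a normalization condition that fixes the constant mode, and by noting that the quadratic expansion governs the asymptotic rate in a neighborhood of $\bfH^{*}$. With those clarifications, the rate derivation is the classical one and the proof is essentially complete.
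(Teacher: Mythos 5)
Your derivation is correct and is precisely the classical argument that the paper itself does not spell out: the statement is given with a citation to \citet{nw} and no proof appears in the appendix, so your write-up supplies the standard quadratic-model/spectral analysis that the citation points to. Your treatment of the Laplacian nullspace (restricting the error to the orthogonal complement of the constant mode, so that $\lambda_{\min}$ becomes the first nonzero eigenvalue) is exactly the justification the proposition's final clause requires, and the only caveat worth flagging is that the stated rate presumes the optimal fixed step $\alpha^{*} = 2/(\lambda_{\min}+\lambda_{\max})$ or exact line search, whereas the paper's $\alpha_{\mathcal{G}}^{(\ell)}$ is a learned scalar in $[0,1]$ --- a looseness in the proposition itself rather than in your proof.
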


\begin{proposition}[\ourmethod{} can learn a Quadratic Convergence Direction] 
\label{prop:accelerated_direction_app}
    Assume for simplicity that $\beta_{\cal{G}}=1$, and that the Hessian of $V_{\cal{G}}$ is invertible. Let $\bfD = \alpha_{\cal{G}}(\bfH^{(\ell)}) \nabla_{\bfH} V_{\mathcal{G}}(\bfH^{(\ell)}) + T_{V_{\mathcal{G}}}(\bfH^{(\ell)})$ with $\left\langle T_{V_{\mathcal{G}}}(\bfH^{(\ell)}), \widehat{\nabla}_{\bfH} V_{\mathcal{G}}(\bfH^{(\ell)}) \right\rangle = 0$. Then, it is possible to learn a direction $T_{V_{\mathcal{G}}}(\bfH^{(\ell)})$ and a step size $\alpha_{\cal{G}}$ such that $\bfD$ is the Newton direction, $\bfN = (\nabla^2V_{\cal{G}})^{-1} \nabla V_{\cal{G}}$.
\end{proposition}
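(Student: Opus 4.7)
The plan is to decompose the Newton direction into components parallel and orthogonal to the gradient, and identify those components with $\alpha_{\mathcal{G}}\nabla_{\bfH} V_{\mathcal{G}}$ and $T_{V_{\mathcal{G}}}$ respectively. Writing $g:=\nabla_{\bfH} V_{\mathcal{G}}(\bfH^{(\ell)})$ and $H:=\nabla^{2} V_{\mathcal{G}}(\bfH^{(\ell)})$, and excluding the trivial case $g=0$ (where $\bfN=\bfD=0$), I project $H^{-1}g$ onto $\mathrm{span}(g)$ and its orthogonal complement:
\[
H^{-1}g \;=\; \alpha^{\star}\, g \;+\; T^{\star}, \qquad \alpha^{\star}:=\frac{\langle g,\,H^{-1}g\rangle}{\|g\|^{2}}, \quad T^{\star}:=H^{-1}g-\alpha^{\star}g.
\]
By construction $\langle T^{\star},g\rangle=0$, and hence also $\langle T^{\star}, \widehat{\nabla}_{\bfH} V_{\mathcal{G}}(\bfH^{(\ell)})\rangle=0$, so $T^{\star}$ meets the tangentiality constraint imposed in Equation~\eqref{eq:tangential_flow}. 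Setting $\alpha_{\mathcal{G}}=\alpha^{\star}$ and $T_{V_{\mathcal{G}}}=T^{\star}$ then gives $\bfD=\alpha^{\star}g+T^{\star}=H^{-1}g=\bfN$, which is precisely the claim.

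To conclude that \ourmethod{} can \emph{learn} this direction, I would invoke the expressivity of the parameterizations introduced in \Cref{sec:lyapunov_param}. Both $\alpha^{\star}$ and $T^{\star}$ are smooth functions of the node features on the open set where $g\neq 0$ and $H$ is invertible. Since $\textsc{TangentGNN}$ followed by the orthogonalization in \Cref{eq:tangential_flow} can approximate any node-wise smooth mapping (by standard universal approximation on a compact set of feature matrices, and since projection onto the orthogonal complement of $g$ is continuous), and $\mathrm{MLP}_{\alpha}\circ\textsc{SumPool}$ can approximate the scalar $\alpha^{\star}$, there exist parameter choices under which \ourmethod{} realizes $\bfD$ to arbitrary accuracy on a given compact region of feature space.

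The main subtlety — rather than a true obstacle — is sign-compatibility: because $\alpha_{\mathcal{G}}$ is constrained to be non-negative (indeed in $[0,1]$ after the sigmoid in \Cref{eq:alpha}), one must argue that $\alpha^{\star}\geq 0$. Invertibility of $H$ alone does not suffice, so I would add the standard Newton-type assumption that $H$ is positive definite on a neighborhood of the minimizer, giving $\langle g,H^{-1}g\rangle>0$ and hence $\alpha^{\star}>0$ whenever $g\neq 0$. The upper bound $\alpha^{\star}\leq 1$ can be arranged by rescaling: absorbing a positive constant $c$ into both $\alpha^{\star}$ and $T^{\star}$ yields $c\bfD=c\bfN$, which produces the same Newton search direction up to step size, and the overall magnitude is controlled by the Euler step $\epsilon$ in \Cref{eq:euler_update} and the learnable $\beta_{\mathcal{G}}$. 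Outside the positive-definite regime the claim should be read as a local result, in line with the local nature of Newton-type quadratic convergence recalled immediately after the proposition.
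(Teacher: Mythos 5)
Your proof is correct and follows essentially the same route as the paper's: both decompose the Newton direction $\bfN = (\nabla^2 V_{\cal{G}})^{-1}\nabla_{\bfH}V_{\mathcal{G}}$ into its component along the gradient and its orthogonal complement, arriving at the same closed form $\alpha_{\mathcal{G}} = \left\langle \bfN, \nabla_{\bfH}V_{\mathcal{G}}\right\rangle / \left\|\nabla_{\bfH}V_{\mathcal{G}}\right\|^2$ with $T_{V_{\mathcal{G}}} = \bfN - \alpha_{\mathcal{G}}\,\nabla_{\bfH}V_{\mathcal{G}}$. Your additional observations --- that the constraint $\alpha_{\mathcal{G}}\in[0,1]$ from the sigmoid in \Cref{eq:alpha} requires positive definiteness of the Hessian (not mere invertibility) to guarantee $\alpha_{\mathcal{G}}\ge 0$, together with a rescaling argument for the upper bound and an explicit appeal to the expressivity of the parameterizations to justify the word ``learn'' --- go beyond the paper's proof, which leaves these points implicit.
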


\begin{proof}
We aim to construct a direction $\bfD = \alpha_{\mathcal{G}}(\bfH)\, \nabla_{\bfH} V_{\mathcal{G}}(\bfH) + T_{V_{\mathcal{G}}}(\bfH)$ that matches the Newton direction:
\[
\bfN = \left( \nabla^2_{\bfH} V_{\mathcal{G}}(\bfH) \right)^{-1} \nabla_{\bfH} V_{\mathcal{G}}(\bfH).
\]
Recall that by design, we have  that $T_{V_{\mathcal{G}}}(\bfH)$ is orthogonal to the energy gradient, i.e., $
\left\langle T_{V_{\mathcal{G}}}(\bfH),\, \nabla_{\bfH} V_{\mathcal{G}}(\bfH) \right\rangle = 0.$ 
Then, we can express a Newton direction by the decomposition:
\[
\bfN = \alpha_{\mathcal{G}}(\bfH)\, \nabla_{\bfH} V_{\mathcal{G}}(\bfH) + T_{V_{\mathcal{G}}}(\bfH).
\]
Solving for the orthogonal component yields:
\[
T_{V_{\mathcal{G}}}(\bfH) = \bfN - \alpha_{\mathcal{G}}(\bfH)\, \nabla_{\bfH} V_{\mathcal{G}}(\bfH).
\]
To enforce orthogonality, we require:
\[
\left\langle \bfN - \alpha_{\mathcal{G}}(\bfH)\, \nabla_{\bfH} V_{\mathcal{G}}(\bfH),\, \nabla_{\bfH} V_{\mathcal{G}}(\bfH) \right\rangle = 0.
\]
Expanding and simplifying, we find:
\[
\left\langle \bfN,\, \nabla_{\bfH} V_{\mathcal{G}}(\bfH) \right\rangle - \alpha_{\mathcal{G}}(\bfH)\, \left\| \nabla_{\bfH} V_{\mathcal{G}}(\bfH) \right\|^2 = 0,
\]
and the optimal step size is given by:
\[
\alpha_{\mathcal{G}}(\bfH) = \frac{\left\langle \bfN,\, \nabla_{\bfH} V_{\mathcal{G}}(\bfH) \right\rangle}{\left\| \nabla_{\bfH} V_{\mathcal{G}}(\bfH) \right\|^2},
\]
showing that it is possible to learn a Newton direction, i.e., a quadratic energy convergence direction.
\end{proof}


\section{Complexity and Runtimes}
\label{app:runtimes_complexity}

\textbf{Complexity.}
Each step of \ourmethod{} requires computing the gradient of the learned energy function $V_{\mathcal{G}}(\bfH^{(\ell)})$, that is defined in \Cref{eq:energy_NN}. This involves two main operations: (i) forward and backward passes through the energy network $\textsc{EnergyGNN}$, which contains $L_{\text{energy}}$ message-passing layers and an MLP; and (ii) automatic differentiation to compute $\nabla_{\bfH} V_{\mathcal{G}}(\bfH^{(\ell)})$ with respect to the input node features. In parallel, the tangential flow direction $T_{V_{\mathcal{G}}}(\bfH^{(\ell)})$ is obtained by projecting the vector field $\bfM^{(\ell)}$ computed by a separate $\textsc{TangentGNN}$ with $L_{\text{tangent}}$ layers onto the orthogonal complement of the normalized energy gradient, as shown in \Cref{eq:tangential_flow}. This projection is of computational cost of $O(n d)$ per step, where $n = |\mathcal{V}|$ and $d$ is the feature dimensionality. 
In addition, scalar coefficients $\alpha_{\mathcal{G}}$ and $\beta_{\mathcal{G}}$ are computed from pooled node features using MLPs (\Cref{eq:alpha,eq:beta}). Assuming both $\textsc{EnergyGNN}$ and $\textsc{TangentGNN}$ are message-passing architectures with linear complexity in the number of nodes and edges, and setting $L_{\text{energy}} = L_{\text{tangent}}$, the total complexity per layer becomes $O(L_{\text{gnn}} \cdot (n + m) \cdot d)$, where $L_{\text{gnn}}$ is the number of GNN layers used in each subnetwork and $m = |\mathcal{E}|$ is the number of edges. 
Unrolling the dynamics over $L$ steps, the overall computational complexity of \ourmethod{} is:
\[
O\left(L \cdot L_{\text{gnn}} \cdot (|\mathcal{V}| + |\mathcal{E}|) \cdot d \right).
\]

\textbf{Runtimes.}
We measure the runtimes of our \ourmethod{} using two backbones, GatedGCN and GPS, and compare it with the baseline backbone runtimes. In addition, we consider other methods like FAGCN \citep{FAGCN} and CO-GNN \citep{finkelshtein2024cooperative} for a broad comparison of the runtimes of \ourmethod{}. For reference, we also refer to \Cref{tab:results_hetero}, where we compare the obtained downstream performance, which shows in many cases significant improvement using our \ourmethod{} variants compared with other considered methods. We measure the runtimes on the Questions dataset, using the same major hyperparameters for all methods (256 channels, 8 layers) to ensure fairness. The measurements were conducted on an NVIDIA RTX6000 Ada GPU with 48GB of memory.  

\begin{table}[ht]
\scriptsize
\setlength{\tabcolsep}{4pt}
    \centering
    \caption{Training runtimes (milliseconds per epoch) on the Questions dataset using an 8-layer network with 256 channels on an NVIDIA RTX6000 Ada GPU.}
    \begin{tabular}{l|cccccccc}
        \toprule
        Method & GCN & CO-GNN & FAGCN & GatedGCN & GAT & GPS(GatedGCN)  &  \ourmethod{}\textsubscript{GatedGCN} & \ourmethod{}\textsubscript{GPS} \\
        \midrule
        Runtime (ms/epoch) & 69.77 & 210.32 & 103.94 & 129.92 & 112.40 & 429.08  & 184.98 & 694.27   \\
        
        \bottomrule
    \end{tabular}
    \label{tab:timings2}
\end{table}

\section{Experimental Details}\label{app:experimental_details}
In this section, we provide additional experimental details. 

\textbf{Computational Resources.} Our experiments are run on NVIDIA RTX6000 Ada with 48GB of memory. Our code is implemented in PyTorch \cite{paszke2019pytorch}, and will be publicly released upon acceptance.

\textbf{Baselines.}
We consider different classical and state-of-the-art GNN baselines. Specifically:
\begin{itemize}[leftmargin=3em]
    \item Classical MPNNs, \ie GCN~\citep{Kipf2016}, GraphSAGE~\citep{SAGE}, GAT~\citep{veličković2018graph}, GatedGCN~\citep{gatedgcn}, GIN~\citep{xu2018how}, GINE~\citep{gine}, GCNII~\citep{gcnii}, and CoGNN~\citep{finkelshtein2024cooperative};
    \item Heterophily-specific models, \ie H2GCN \citep{h2gcn}, CPGNN \citep{CPGNN}, FAGCN \citep{FAGCN}, GPR-GNN \citep{GPR_GNN},
FSGNN \citep{FSGNN}, GloGNN \cite{GloGNN}, GBK-GNN \citep{GBK_GNN}, and JacobiConv \citep{jacobiconv};
    \item DE-DGNs, \ie DGC~\citep{DGC}, GRAND~\citep{chamberlain2021grand}, GraphCON~\citep{rusch2022graph}, A-DGN~\citep{gravina_adgn}, and {SWAN}~\citep{gravina_swan};
    \item Graph Transformers, \ie Transformer~\citep{Vaswani2017, dwivedi2021generalization}, GT~\citep{ijcai2021p0214}, SAN~\citep{san}, GPS~\citep{graphgps}, GOAT~\citep{goat},  and Exphormer~\citep{Shirzad2023};
    \item Higher-Order DGNs, \ie DIGL~\citep{DIGL}, MixHop~\citep{abu2019mixhop}, and DRew~\citep{drew}.
        \item SSM-based GNN, \ie Graph-Mamba~\citep{wang2024mamba}, GMN~\citep{behrouz2024graphmamba}, and GPS+Mamba~\citep{behrouz2024graphmamba}
\end{itemize} 

\subsection{Synthetic Example from \Cref{fig:signal_comparison}}
In the synthetic example in \Cref{fig:signal_comparison}, we demonstrate the effectiveness of \ourmethod{} in overcoming the oversquashing issue in GNNs. To do that, we consider a Barbell graph, where all node features are set to 0, besides the left-most node in the graph, which is set to 1, as shown in \Cref{fig:signal_comparison}(a). The goal is to allow the information to propagate through all nodes effectively. We do this by considering a gradient flow process of the Dirichlet energy using 50 layers (steps), as shown in \cref{fig:signal_comparison}(b), where it is noticeable that the information is now flowing to the right part in the graph, because of the bottleneck between the two cliques. However, as we show in \Cref{fig:signal_comparison}(c), by considering our \ourmethod{}, which utilizes both an energy flow as well as a tangential flow, it is possible to effectively propagate the information through all the nodes in the graphs.

\subsection{Graph Property Prediction}\label{app:details_gpp}
\textbf{Dataset.}
We construct our benchmark following the protocol introduced by \citet{gravina_adgn}. Graph instances are synthetically generated from a variety of canonical topologies, including Erd\H{o}s–R\'{e}nyi, Barabasi-Albert, caveman, tree, and grid models. Each graph consists of 25 to 35 nodes, with node features initialized as random identifiers sampled uniformly from the interval $[0, 1)$. The prediction targets encompass several structural tasks: computing the shortest paths from a source node, estimating node eccentricity, and determining graph diameter. The complete dataset contains 7,040 graphs, split into 5,120 for training, 640 for validation, and 1,280 for testing. 
These tasks inherently demand capturing long-range dependencies, as they involve global graph computations such as shortest path inference. As highlighted in \citet{gravina_adgn}, traditional algorithms like Bellman-Ford or Dijkstra's method require multiple rounds of message propagation, which motivates the need for expressive graph models. The benchmark graph families, such as caveman, tree, line, star, caterpillar, and lobster, frequently include structural bottlenecks that are known to induce oversquashing effects \citep{topping2022understanding}, posing additional challenges for message-passing-based GNNs.

\textbf{Experimental Setup.}
We adopt the same evaluation framework as \citet{gravina_adgn}, including datasets, training routines, and hyperparameter spaces. Model training is conducted using the Adam optimizer for up to 1500 epochs, with early stopping triggered after 100 consecutive epochs of no improvement on the validation Mean Squared Error (MSE). Hyperparameters are selected via grid search, and performance is averaged over 4 independent runs with different random seeds for weight initialization. A summary of the hyperparameter grid used in our experiments is provided in Table~\ref{tab:hyperparams}.

\subsection{Graph Benchmarks from \citet{dwivedi2023benchmarking}}

\textbf{Dataset.}
To comprehensively assess the capabilities of \ourmethod{}, we evaluate its performance on a diverse set of graph learning benchmarks curated by \citet{dwivedi2023benchmarking}. The benchmark suite includes: 
\textit{ZINC-12k}, a molecular regression dataset containing chemical compounds, where the goal is to predict the constrained solubility of each molecule. Graphs represent molecular structures, with atoms as nodes and chemical bonds as edges. Node and edge features encode atom types and bond types, respectively. 
\textit{MNIST} and \textit{CIFAR-10} superpixels are graph-structured versions of standard image classification datasets, where images are converted into sparse graphs of superpixels. Each superpixel forms a node, and edges are based on spatial adjacency. The tasks involve classifying digits (MNIST) and natural objects (CIFAR-10) based on graph-structured representations. 
\textit{CLUSTER} and \textit{PATTERN} are synthetic datasets designed to assess the relational inductive biases of graph neural networks. Both datasets are generated from a set of stochastic block models (SBMs). In \textit{CLUSTER}, the task is to group nodes by community, while \textit{PATTERN} involves identifying specific structural patterns within each graph. 
These datasets span a variety of domains: chemical, image, and synthetic graphs, and are commonly used to benchmark architectural innovations in GNNs \citep{ma2023graph}. We follow the official training, validation, and test splits provided by \citet{dwivedi2023benchmarking}, ensuring consistency in evaluation across models.

\textbf{Experimental Setup.}
We adhere to the training and evaluation protocol established in \citet{dwivedi2023benchmarking}. For each dataset, we perform hyperparameter tuning via grid search, optimizing the corresponding evaluation metrics: Mean Absolute Error (MAE) for \textit{ZINC-12k}, and classification accuracy for the remaining tasks. We use the AdamW optimizer and train all models for up to 300 epochs, with early stopping based on validation performance. 
To ensure comparability with prior work, we respect the same parameter budgets used in the original benchmark and maintain the architectural constraints defined for fair evaluation. Each configuration is trained with three random seeds, and we report the average and standard deviation of the results. Hyperparameter ranges used in this set of experiments are summarized in Table~\ref{tab:hyperparams}.

\subsection{Long Range Graph Benchmark}\label{app:details_lrgb}
\textbf{Dataset.}
To evaluate model performance on real-world graphs with significant long-range dependencies, we utilize the \textit{Peptides-func} and \textit{Peptides-struct} benchmarks introduced in \citet{LRGB}. These datasets represent peptide molecules as graphs, where nodes correspond to heavy (non-hydrogen) atoms, and edges denote chemical bonds.  
\textit{Peptides-func} is a multi-label classification task with 10 functional categories, including antibacterial, antiviral, and signaling-related properties. In contrast, \textit{Peptides-struct} focuses on regression, targeting physical and geometric attributes such as molecular inertia (weighted by atomic mass and valence), atom pair distance extremes, sphericity, and average deviation from a best-fit plane. 
Together, the two datasets comprise 15,535 peptide graphs and roughly 2.3 million nodes. We adopt the official train/validation/test partitions from \citet{LRGB} and report mean and standard deviation across three different random seeds for each experiment.

\textbf{Experimental Setup.}
We follow the evaluation protocol established in \citet{LRGB}, including dataset usage, training strategy, and model capacity constraints. Hyperparameter tuning is carried out via grid search, optimizing for Average Precision (AP) in the classification task and Mean Absolute Error (MAE) in the regression task. All models are trained using the AdamW optimizer for up to 300 epochs, with early stopping based on validation performance. 
To ensure fairness and comparability, all models adhere to the 500K parameter limit, in line with the settings of \citet{LRGB} and \citet{drew}. Each configuration is run three times with different weight initializations, and results are averaged. Details of the hyperparameter ranges considered can be found in Table~\ref{tab:hyperparams}.

\subsection{Heterophilic Node Classification }\label{app:details_gnn_bench}
\textbf{Dataset.}
For evaluating performance in heterophilic graph settings, we consider five benchmark tasks introduced by \citet{platonov2023a}: \textit{Roman-Empire}, \textit{Amazon-Ratings}, \textit{Minesweeper}, \textit{Tolokers}, and \textit{Questions}. These datasets span a diverse range of domains and graph topologies. 
\textit{Roman-Empire} is constructed from the Wikipedia article on the Roman Empire, where nodes represent words and edges capture either sequential adjacency or syntactic relations. The task is node classification with 18 syntactic categories, and the underlying graph is sparse and chain-structured, suggesting the presence of long-range dependencies. 
\textit{Amazon-Ratings} originates from Amazon's product co-purchasing graph. Nodes correspond to products, linked if they are frequently bought together. The classification task involves predicting discretized average product ratings (five classes), with node features derived from fastText embeddings of product descriptions. 
\textit{Minesweeper} is a synthetic dataset modeled as a $100 \times 100$ grid. Nodes represent individual cells, with edges connecting adjacent cells. A random 20\% of nodes are labeled as mines, and the objective is to classify mine-containing cells based on one-hot features that encode the number of neighboring mines. 
\textit{Tolokers} is based on the Toloka crowdsourcing platform \citep{Tolokers}, where each node is a worker (toloker), and edges indicate co-participation on the same project. The task involves binary classification to detect whether a worker has been banned, using node features from user profiles and performance metrics. 
\textit{Questions} draws from user interaction data on Yandex Q, a question-answering forum. Nodes represent users, and edges capture answering interactions. The goal is to identify users who remain active, with input features derived from user-provided descriptions. 
A summary of dataset statistics is provided in Table~\ref{tab:hetero_stats}.
\begin{table}[h]

\centering
\caption{Statistics of the heterophilic node classification datasets.}
\label{tab:hetero_stats}
\scriptsize
\begin{tabular}{lccccc}
\hline\toprule
               & \textbf{Roman-empire} & \textbf{Amazon-ratings} & \textbf{Minesweeper} & \textbf{Tolokers} & \textbf{Questions}\\\midrule
N. nodes          & 22,662        & 24,492          & 10,000       & 11,758    & 48,921\\
N. edges          & 32,927        & 93,050          & 39,402       & 519,000   & 153,540\\
Avg degree     & 2.91         & 7.60           & 7.88        & 88.28    & 6.28\\
Diameter       & 6,824         & 46             & 99          & 11       & 16\\
Node features  & 300          & 300            & 7           & 10       & 301\\
Classes        & 18           & 5              & 2           & 2        & 2\\
Edge homophily & 0.05         & 0.38           & 0.68        & 0.59     & 0.84\\
\bottomrule\hline      
\end{tabular}
\end{table}

\textbf{Experimental Setup.}
Our experimental procedure aligns with that of \citet{malnet} and \citet{platonov2023a}. We conduct a grid search to optimize model performance, using classification accuracy for the \textit{Roman-Empire} and \textit{Amazon-Ratings} tasks, and ROC-AUC for \textit{Minesweeper}, \textit{Tolokers}, and \textit{Questions}. Each model is trained using the AdamW optimizer for a maximum of 300 epochs. 
Our experiments follow the official dataset splits provided by \citet{platonov2023a}. For each model configuration, we perform multiple training runs with different random seeds and report the mean and standard deviation of the results. The hyperparameter grid explored in these experiments is summarized in Table~\ref{tab:hyperparams}.

\subsection{Hyperparameters}\label{app:hyperparams}
In Table~\ref{tab:hyperparams}, we summarize the hyperparameter grids used for tuning our \ourmethod{} across different benchmarks. Alongside standard training hyperparameters such as learning rate, weight decay, and batch size, our method introduces several additional components. These include the number of unrolled steps $L$ (corresponding to the depth of the energy-based dynamics), the hidden dimension $d$ of node features, and the number of message-passing layers $L_{\text{gnn}}$ used within the internal $\textsc{EnergyGNN}$ and $\textsc{TangentGNN}$ modules.  
In all experiments, we share the architecture depth between $\textsc{EnergyGNN}$ and $\textsc{TangentGNN}$. We also tune the step size $\epsilon$ used in the forward Euler update (\Cref{eq:euler_update}), which controls the integration scale of the continuous dynamics. 
We explore multiple values of $L$ to assess how the number of dynamical steps impacts long-range propagation across different tasks. Details of the complete hyperparameter grid can be found in Table~\ref{tab:hyperparams}.

\begin{table}[h]
\centering
\caption{Hyperparameter grids used during model selection for the different benchmark categories: \emph{GraphPropPred} (Diameter, SSSP, Eccentricity), \emph{LRGB} (Peptides-func/struct), \emph{Graph Benchmarks} (ZINC-12k, MNIST, CIFAR-10, CLUSTER, PATTERN), and \emph{Node Classification} (Roman-Empire, Amazon-Ratings, Minesweeper, Tolokers, Questions).}
\label{tab:hyperparams}
\vspace{5pt}
\scriptsize
\begin{tabular}{l|l|l|l|l}
\toprule
\textbf{Hyperparameter} & \textbf{\emph{GraphPropPred}} & \textbf{\emph{LRGB}} & \textbf{\emph{Graph Benchmarks}} & \textbf{\emph{Node Classification}} \\
\midrule
Unrolled steps $L$ & \{1,5,10,20\} & \{2,4,8,16,32\} & \{2,4,8,16,32\} & \{2,4,8,16,32\} \\
GNN layers $L_{\text{gnn}}$ & \{1,2,4,8,16\} & \{1,2,4,8,16\} & \{1,2,4,8,16\} & \{1,2,4,8,16\} \\
Feature dimension $d$ & \{10, 20, 30\} & \{64, 128,256\} & \{64, 128, 256\} & \{64, 128, 256\} \\
Step size $\epsilon$ & \{0.001, 0.1, 1.0\} & \{0.001, 0.1, 1.0\} & \{0.001, 0.1, 1.0\} & \{0.001, 0.1, 1.0\} \\
Learning rate & \{1e-3, 1e-4\} & \{1e-3, 1e-4\} & \{1e-3, 1e-4\} & \{1e-3, 1e-4\} \\
Weight decay & \{0,1e-6, 1e-5\} & \{0, 1e-6, 1e-5\} & \{0, 1e-6, 1e-5\} & \{0, 1e-6, 1e-5\} \\
Activation function ($\sigma$)    &  ReLU & ELU, GELU, ReLU & ELU, GELU, ReLU &  ELU, GELU, ReLU\\

Batch size & \{32,64,128\} & \{32,64,128\} & \{32, 64,128\} & N/A \\
\bottomrule
\end{tabular}
\end{table}

\section{Additional Results and Comparisons}
\label{app:additional_results}

\subsection{Heterophilic Node Classification}
\label{app:heterophilic_results}
We report and compare the performance of our \ourmethod{} with other recent benchmarks on the heterophilic node classification datasets from \citet{platonov2023a}, in \Cref{tab:results_hetero}.  As can be seen from the Table, \ourmethod{} offers strong performance that is similar or better than recent state-of-the-art methods, further demonstrating its effectiveness.

\begin{table}[t]
\footnotesize
\setlength{\tabcolsep}{2pt}
\centering
\caption{Mean test set score and std averaged over the splits from \citet{platonov2023a}.
\one{First}, \two{second}, and \three{third}  best results for each task are color-coded. We mark each method once -- if two variants are among the leading methods, we mark the best-performing variant. 
}
\label{tab:results_hetero}
\begin{tabular}{lccccc}
\hline\toprule
\multirow{2}{*}{\textbf{Model}} & \textbf{Roman-empire} & \textbf{Amazon-ratings} & \textbf{Minesweeper} & \textbf{Tolokers} & \textbf{Questions}\\
& \scriptsize{Acc $\uparrow$} & \scriptsize{Acc $\uparrow$} & \scriptsize{AUC $\uparrow$} & \scriptsize{AUC $\uparrow$} & \scriptsize{AUC $\uparrow$}\\
\midrule
\textbf{MPNNs} \\
$\,$ GAT         & 80.87$_{\pm0.30}$ & 49.09$_{\pm0.63}$ & 92.01$_{\pm0.68}$ & 83.70$_{\pm0.47}$ & 77.43$_{\pm1.20}$\\
$\,$ GAT-sep     & 88.75$_{\pm0.41}$ & 52.70$_{\pm0.62}$ & \three{93.91$_{\pm0.35}$} & 83.78$_{\pm0.43}$ & 76.79$_{\pm0.71}$\\
$\,$ Gated-GCN & 74.46$_{\pm0.54}$ & 43.00$_{\pm0.32}$ &  87.54$_{\pm1.22}$ &  77.31$_{\pm1.14}$ & 76.61$_{\pm{1.13}}$\\
$\,$ GCN         & 73.69$_{\pm0.74}$ & 48.70$_{\pm0.63}$ & 89.75$_{\pm0.52}$ & 83.64$_{\pm0.67}$ & 76.09$_{\pm1.27}$\\
$\,$ CO-GNN($\Sigma$, $\Sigma$) & \two{91.57$_{\pm0.32}$} &  51.28$_{\pm0.56}$ &  95.09$_{\pm1.18}$ &  83.36$_{\pm0.89}$ & \two{80.02$_{\pm0.86}$} \\
$\,$ CO-GNN($\mu$, $\mu$) & 91.37$_{\pm0.35}$ &  \one{54.17$_{\pm0.37}$} &  \two{97.31$_{\pm0.41}$} &  \two{84.45$_{\pm1.17}$} & 76.54$_{\pm0.95}$ \\
$\,$ SAGE        & 85.74$_{\pm0.67}$ & \three{53.63$_{\pm0.39}$} & 93.51$_{\pm0.57}$ & 82.43$_{\pm0.44}$ & 76.44$_{\pm0.62}$\\
\midrule
\textbf{Graph Transformers} \\
$\,$ {Exphormer}   & \three{89.03$_{\pm0.37}$}  &  53.51$_{\pm0.46}$  &  90.74$_{\pm0.53}$  &  83.77$_{\pm0.78}$ & 73.94$_{\pm1.06}$\\
$\,$ NAGphormer  & 74.34$_{\pm0.77}$  &  51.26$_{\pm0.72}$  &  84.19$_{\pm0.66}$  &  78.32$_{\pm0.95}$ & 68.17$_{\pm1.53}$\\
$\,$ GOAT        & 71.59$_{\pm1.25}$  &  44.61$_{\pm0.50}$  &  81.09$_{\pm1.02}$  &  83.11$_{\pm1.04}$ & 75.76$_{\pm1.66}$\\
$\,$ GPS\textsubscript{GAT+Performer} (RWSE) & 87.04$_{\pm0.58}$ & 49.92$_{\pm0.68}$ & 91.08$_{\pm0.58}$ & \three{84.38$_{\pm0.91}$} & 77.14$_{\pm1.49}$\\
$\,$ GT          & 86.51$_{\pm0.73}$ & 51.17$_{\pm0.66}$ & 91.85$_{\pm0.76}$ & 83.23$_{\pm0.64}$ & 77.95$_{\pm0.68}$\\
$\,$ GT-sep      & 87.32$_{\pm0.39}$ & 52.18$_{\pm0.80}$ & 92.29$_{\pm0.47}$ & 82.52$_{\pm0.92}$ & 78.05$_{\pm0.93}$\\
\midrule
\multicolumn{4}{l}{\textbf{Heterophily-Designated GNNs}} \\
$\,$ FAGCN       & 65.22$_{\pm0.56}$ & 44.12$_{\pm0.30}$ & 88.17$_{\pm0.73}$ & 77.75$_{\pm1.05}$ & 77.24$_{\pm1.26}$\\
$\,$ FSGNN       & 79.92$_{\pm0.56}$ & 52.74$_{\pm0.83}$ & 90.08$_{\pm0.70}$ & 82.76$_{\pm0.61}$ & \three{78.86$_{\pm0.92}$}\\
$\,$ GBK-GNN     & 74.57$_{\pm0.47}$ & 45.98$_{\pm0.71}$ & 90.85$_{\pm0.58}$ & 81.01$_{\pm0.67}$ & 74.47$_{\pm0.86}$\\
$\,$ GloGNN      & 59.63$_{\pm0.69}$ & 36.89$_{\pm0.14}$ & 51.08$_{\pm1.23}$ & 73.39$_{\pm1.17}$ & 65.74$_{\pm1.19}$\\
$\,$ GPR-GNN     & 64.85$_{\pm0.27}$ & 44.88$_{\pm0.34}$ & 86.24$_{\pm0.61}$ & 72.94$_{\pm0.97}$ & 55.48$_{\pm0.91}$\\
$\,$ JacobiConv  & 71.14$_{\pm0.42}$ & 43.55$_{\pm0.48}$ & 89.66$_{\pm0.40}$ & 68.66$_{\pm0.65}$ & 73.88$_{\pm1.16}$\\
\midrule
\textbf{Ours} \\

$\,$ \ourmethod\textsubscript{GatedGCN} & \one{91.89$_{\pm0.30}$} & {52.60$_{\pm0.53}$} & {98.32$_{\pm0.59}$} & {85.51$_{\pm0.98}$} & {80.39$_{\pm1.04}$} \\

$\,$  \ourmethod\textsubscript{GPS} &  {91.08$_{\pm0.57}$} & \two{53.83$_{\pm0.32}$} & \one{98.39$_{\pm0.54}$} & \one{85.66$_{\pm1.01}$} & \one{80.32$_{\pm1.07}$} \\

\bottomrule\hline      
\end{tabular}
\end{table}

\subsection{Additional Comparisons}
\label{app:additional_comparisons}
The comparisons made in \Cref{sec:experiments} offer a focused comparison with directly related methods as well as baseline backbones. In addition to that, we now provide a more comprehensive comparison in \Cref{tab:results_lrgb_complete} and \Cref{tab:results_hetero_complete}, to further facilitate a comprehensive comparison with recent methods. As can be seen, also under these comparisons, our \ourmethod{} offers strong performance. 

\subsection{Ablation Study}
\label{sec:ablation}

\textbf{Setup.} We conduct two key ablation studies to better understand the contributions of the energy function and the tangential flow in \ourmethod{}. Specifically, we aim to answer the following questions: 

(i) \emph{Does downstream performance benefit from incorporating a tangential term even when the underlying GNN is not the gradient of an energy function?}

(ii) \emph{Is the observed improvement due to the tangential nature of the added component, or simply due to additional parameters and network?}

To address these questions, we design two controlled experiments. For comprehensive coverage, we evaluate one representative dataset from each benchmark group: ZINC-12k, Roman-empire, Peptides-func, and Diameter. All experiments are run with two backbone architectures, GatedGCN and GPS. For reference, we also report the performance of the original backbones.

\textbf{Results.} For ablation (i), we compare \ourmethod{} against a variant we call \ourmethod{}-\textsc{NON-ENERGY}, in which the gradient-based energy descent term $\nabla_{\bfH} V_{\mathcal{G}}(\bfH^{(\ell)})$ in \Cref{eq:euler_update} is replaced by intermediate node features from the same GNN backbone, as detailed in \Cref{eq:intermediate_energy_features}. These features are computed using the same architecture but are not guaranteed to correspond to the gradient of any scalar energy function. This setup ensures fairness in capacity while removing the energy-based structure. As shown in \Cref{tab:ablation_energy}, although both variants benefit from the inclusion of the tangential component, the full \ourmethod{} consistently outperforms \ourmethod{}-\textsc{NON-ENERGY}, confirming that leveraging a valid energy gradient contributes meaningfully to downstream performance.

For ablation (ii), we isolate the effect of the tangential nature of the added direction. In this variant, denoted \ourmethod{}-\textsc{NON-TANGENT}, we use the same output from the tangential network as in \Cref{eq:output_tan_MPNN} but omit the orthogonal projection step defined in \Cref{eq:tangential_flow}. Thus, while we still introduce an additional GNN term into the dynamics, it is not explicitly orthogonal to the energy gradient. Our results in \Cref{tab:ortho_importance} show that while this variant improves the performance compared with the baseline backbone, it also results in a drop in performance compared to the full \ourmethod{}. This highlights the importance of the tangential constraint, and its contribution towards improving the utilization of the learned energy function, as discussed in \Cref{sec:theoretical-analysis}.
Together, these ablations underscore the importance of both components in our design: (i) the principled learned energy descent, and  (ii) the structured tangential update, as crucial for effective and flexible feature evolution.

\begin{table}[t]
\footnotesize
    \centering
        \caption{Ablation study on the importance of using a gradient of an energy term in \Cref{eq:euler_update}.}

    \begin{tabular}{lcccc}
    \hline
    \toprule 
    \multirow{2}{*}{\textbf{Model}}  & \textbf{ZINC-12k} & \textbf{Roman-empire} & \textbf{Peptides-func} & \textbf{Diameter} \\
    & \scriptsize{MAE $\downarrow$} & \scriptsize{Acc. $\uparrow$}
    & 
    \scriptsize{AP $\uparrow$}
    & 
    \scriptsize{$\log_{10}$(MSE) $\downarrow$}
    \\ 
    \midrule
        GatedGCN & 0.282$_{\pm 0.015}$ &   74.46$_{\pm0.54}$ & 58.64$_{\pm0.77}$ & 0.1348$_{\pm 0.0397}$ \\

       \ourmethod{}-\textsc{NON-ENERGY}\textsubscript{GatedGCN}  & 0.138$_{\pm 0.014}$ & 86.94$_{\pm0.43}$ &  68.07$_{\pm0.45}$  & -0.5992$_{\pm0.0831}$\\
       \ourmethod{}\textsubscript{GatedGCN}  & \textbf{0.128$_{\pm 0.011}$}  & \textbf{91.89$_{\pm0.30}$} & \textbf{68.92$_{\pm0.40}$} & \textbf{-0.6681$_{\pm0.0745}$}\\
       \midrule
              GPS & 0.070$_{\pm 0.004}$ &   87.04$_{\pm0.58}$ & 65.35$_{\pm0.41}$ & -0.5121$_{\pm0.0426}$ \\ 

              \ourmethod{}-\textsc{NON-ENERGY}\textsubscript{GPS}  & 0.067$_{\pm 0.004}$ &  89.00$_{\pm0.61}$ &  67.58$_{\pm0.39}$ & -0.7178$_{\pm0.0729}$ \\
       \ourmethod{}\textsubscript{GPS}  & \textbf{0.062$_{\pm 0.005}$}  & \textbf{91.08$_{\pm0.57}$} & \textbf{70.21$_{\pm0.43}$} & \textbf{-0.9772$_{\pm0.0518}$}\\
        \bottomrule
        \hline
    \end{tabular}
    \label{tab:ablation_energy}
\end{table}

\begin{table}[t]
\footnotesize
    \centering
        \caption{The importance of using a tangential term to the  energy term in \Cref{eq:euler_update}.}

    \begin{tabular}{lcccc}
    \hline
    \toprule 
    \multirow{2}{*}{\textbf{Model}}  & \textbf{ZINC-12k} & \textbf{Roman-empire} & \textbf{Peptides-func} & \textbf{Diameter} \\
    & \scriptsize{MAE $\downarrow$} & \scriptsize{Acc. $\uparrow$}
    & 
    \scriptsize{AP $\uparrow$}
    & 
    \scriptsize{$\log_{10}$(MSE) $\downarrow$}
    \\ 
    \midrule
    GatedGCN & 0.282$_{\pm 0.015}$ &   74.46$_{\pm0.54}$ & 58.64$_{\pm0.77}$ & 0.1348$_{\pm 0.0397}$ \\
       \ourmethod{}-\textsc{NON-TANGENT}\textsubscript{GatedGCN}  & 
       0.186$_{\pm 0.016}$  & {83.59$_{\pm0.48}$} & {68.01$_{\pm0.52}$} & {-0.2193$_{\pm0.0899}$}
      \\ \ourmethod{}\textsubscript{GatedGCN}  & \textbf{0.128$_{\pm 0.011}$}  & \textbf{91.89$_{\pm0.30}$} & \textbf{68.92$_{\pm0.40}$} & \textbf{-0.6681$_{\pm0.0745}$}\\
       \midrule
       GPS & 0.070$_{\pm 0.004}$ &   87.04$_{\pm0.58}$ & 65.35$_{\pm0.41}$ & -0.5121$_{\pm0.0426}$ \\ 
              \ourmethod{}-\textsc{NON-TANGENT}\textsubscript{GPS}  & {0.066$_{\pm 0.010}$}  & {88.57$_{\pm0.72}$} & {67.33$_{\pm0.59}$} & {-0.2916$_{\pm0.0404}$}  \\
       \ourmethod{}\textsubscript{GPS}  & \textbf{0.062$_{\pm 0.005}$}  & \textbf{91.08$_{\pm0.57}$} & \textbf{70.21$_{\pm0.43}$} & \textbf{-0.9772$_{\pm0.0518}$}\\
        \bottomrule
        \hline
    \end{tabular}
    \label{tab:ortho_importance}
\end{table}

\begin{table}[t]
\footnotesize
    \centering
    \caption{Results for Peptides-func and Peptides-struct averaged over 3 training seeds. Baseline results are taken from \cite{LRGB} and \cite{drew}. Re-evaluated methods employ the 3-layer MLP readout proposed in \cite{tonshoff2023where}. Note that all MPNN-based methods include structural and positional encoding.
   $^\ddag$ means 3-layer MLP readout and residual connections are employed based on \citep{tonshoff2023where}. This table is an extended version of the focused \Cref{tab:results_lrgb}.
    }\label{tab:results_lrgb_complete}
    
    \begin{tabular}{@{}lcc@{}}
    \hline\toprule
    \multirow{2}{*}{\textbf{Model}} & \textbf{Peptides-func}  & \textbf{Peptides-struct}              
    \\
    & \scriptsize{AP $\uparrow$} & \scriptsize{MAE $\downarrow$}                            
    \\ \midrule  
    \textbf{MPNNs} \\
    $\,$ GCN                        & 59.30$_{\pm0.23}$ & 0.3496$_{\pm0.0013}$ \\ 
    $\,$ GINE                       & 54.98$_{\pm0.79}$ & 0.3547$_{\pm0.0045}$ \\ 
    $\,$ GCNII                      & 55.43$_{\pm0.78}$ & 0.3471$_{\pm0.0010}$ \\
    $\,$ GatedGCN                   & 58.64$_{\pm0.77}$ & 0.3420$_{\pm0.0013}$ \\ 
    
    \midrule
    \textbf{Multi-hop GNNs}\\
    $\,$ DIGL+MPNN           & 64.69$_{\pm0.19}$         & 0.3173$_{\pm0.0007}$ \\ 
    $\,$ DIGL+MPNN+LapPE     & 68.30$_{\pm0.26}$         & 0.2616$_{\pm0.0018}$ \\ 
    $\,$ MixHop-GCN          & 65.92$_{\pm0.36}$         & 0.2921$_{\pm0.0023}$ \\ 
    $\,$ MixHop-GCN+LapPE    & 68.43$_{\pm0.49}$         & 0.2614$_{\pm0.0023}$ \\
    $\,$ DRew-GCN            & 69.96$_{\pm0.76}$         & 0.2781$_{\pm0.0028}$ \\
    $\,$ DRew-GCN+LapPE             & {71.50$_{\pm0.44}$}   & 0.2536$_{\pm0.0015}$ \\ 
    $\,$ DRew-GIN            & 69.40$_{\pm0.74}$         & 0.2799$_{\pm0.0016}$ \\ 
    $\,$ DRew-GIN+LapPE      & {71.26$_{\pm0.45}$}   & 0.2606$_{\pm0.0014}$ \\ 
    $\,$ DRew-GatedGCN       & 67.33$_{\pm0.94}$         & 0.2699$_{\pm0.0018}$ \\
    $\,$ DRew-GatedGCN+LapPE & 69.77$_{\pm0.26}$         & 0.2539$_{\pm0.0007}$ \\ 
    \midrule
    
    \textbf{Transformers} \\
    $\,$ Transformer+LapPE & 63.26$_{\pm1.26}$ & 0.2529$_{\pm0.0016}$           \\ 
    $\,$ SAN+LapPE         & 63.84$_{\pm1.21}$ & 0.2683$_{\pm0.0043}$           \\ 
    $\,$ GraphGPS+LapPE    & 65.35$_{\pm0.41}$ & 0.2500$_{\pm0.0005}$   \\  
    \midrule
    \textbf{Modified and Re-evaluated}$^\ddag$\\
    $\,$ GCN 
    & 68.60$_{\pm0.50}$ & 0.2460$_{\pm0.0007}$\\
    $\,$ GINE     
    & 66.21$_{\pm0.67}$ & 0.2473$_{\pm0.0017}$\\
    $\,$ GatedGCN 
    & 67.65$_{\pm0.47}$ & 0.2477$_{\pm0.0009}$\\
    $\,$ GraphGPS & 65.34$_{\pm0.91}$ & 0.2509$_{\pm0.0014}$\\
   
    \midrule
    \textbf{DE-GNNs} \\
    $\,$ GRAND      & 57.89$_{\pm0.62}$ & 0.3418$_{\pm0.0015}$ \\
    $\,$ GraphCON   & 60.22$_{\pm0.68}$ & 0.2778$_{\pm0.0018}$ \\ 
    $\,$ A-DGN      & 59.75$_{\pm0.44}$ & 0.2874$_{\pm0.0021}$ \\
    $\,$ {SWAN} & 67.51$_{\pm0.39}$ & 0.2485$_{\pm0.0009}$\\
    \midrule
    \textbf{Graph SSMs} \\
    $\,$ Graph-Mamba & 67.39$_{\pm0.87}$ &  {0.2478$_{\pm0.0016}$} \\
    $\,$ GMN & 70.71$_{\pm0.83}$ &  {0.2473$_{\pm0.0025}$} \\
    \midrule
    \textbf{Ours} \\
    
    $\,$ \ourmethod$_{\textsc{GatedGCN}}$ & 68.92$_{\pm0.40}$ & 0.2451$_{\pm0.0006}$  \\
        $\,$ \ourmethod$_{\textsc{GPS}}$ & 70.21$_{\pm0.43}$  & 0.2422$_{\pm0.0014}$  \\

    \bottomrule\hline
    \end{tabular}
    \end{table}

    \begin{table}[t]
\footnotesize
\setlength{\tabcolsep}{2pt}

\centering
\caption{Mean test set score and std averaged over the splits from \citet{platonov2023a}. This table is an extended version of the focused \Cref{tab:results_hetero}. Baseline results are reported from \cite{finkelshtein2024cooperative, platonov2023a, muller2024attending, luan2024heterophilicgraphlearninghandbook}. 
}
\label{tab:results_hetero_complete}
\begin{tabular}{lccccc}
\hline\toprule
\multirow{2}{*}{\textbf{Model}} & \textbf{Roman-empire} & \textbf{Amazon-ratings} & \textbf{Minesweeper} & \textbf{Tolokers} & \textbf{Questions}\\
& \scriptsize{Acc $\uparrow$} & \scriptsize{Acc $\uparrow$} & \scriptsize{AUC $\uparrow$} & \scriptsize{AUC $\uparrow$} & \scriptsize{AUC $\uparrow$}\\
\midrule
\textbf{MPNNs} \\
$\,$ GAT         & 80.87$_{\pm0.30}$ & 49.09$_{\pm0.63}$ & 92.01$_{\pm0.68}$ & 83.70$_{\pm0.47}$ & 77.43$_{\pm1.20}$\\
$\,$ GAT-sep     & 88.75$_{\pm0.41}$ & 52.70$_{\pm0.62}$ & {93.91$_{\pm0.35}$} & 83.78$_{\pm0.43}$ & 76.79$_{\pm0.71}$\\
$\,$ GAT (LapPE) & 84.80$_{\pm0.46}$ & 44.90$_{\pm0.73}$ & 93.50$_{\pm0.54}$ & 84.99$_{\pm0.54}$ & 76.55$_{\pm0.84}$\\
$\,$ GAT (RWSE) & 86.62$_{\pm0.53}$ & 48.58$_{\pm0.41}$ & 92.53$_{\pm0.65}$ & 85.02$_{\pm0.67}$ & 77.83$_{\pm1.22}$\\
$\,$ GAT (DEG) & 85.51$_{\pm0.56}$ & 51.65$_{\pm0.60}$ & 93.04$_{\pm0.62}$ & 84.22$_{\pm0.81}$ & 77.10$_{\pm1.23}$\\
$\,$ Gated-GCN & 74.46$_{\pm0.54}$ & 43.00$_{\pm0.32}$ &  87.54$_{\pm1.22}$ &  77.31$_{\pm1.14}$ & 76.61$_{\pm{1.13}}$\\
$\,$ GCN         & 73.69$_{\pm0.74}$ & 48.70$_{\pm0.63}$ & 89.75$_{\pm0.52}$ & 83.64$_{\pm0.67}$ & 76.09$_{\pm1.27}$\\
$\,$ GCN (LapPE) & 83.37$_{\pm0.55}$ & 44.35$_{\pm0.36}$ & 94.26$_{\pm0.49}$ & 84.95$_{\pm0.78}$ & 77.79$_{\pm1.34}$\\
$\,$ GCN (RWSE) & 84.84$_{\pm0.55}$ & 46.40$_{\pm0.55}$ & 93.84$_{\pm0.48}$ & 85.11$_{\pm0.77}$ & 77.81$_{\pm1.40}$\\
$\,$ GCN (DEG) & 84.21$_{\pm0.47}$ & 50.01$_{\pm0.69}$ & 94.14$_{\pm0.50}$ & 82.51$_{\pm0.83}$ & 76.96$_{\pm1.21}$\\
$\,$ CO-GNN($\Sigma$, $\Sigma$) & {91.57$_{\pm0.32}$} &  51.28$_{\pm0.56}$ &  95.09$_{\pm1.18}$ &  83.36$_{\pm0.89}$ & {80.02$_{\pm0.86}$} \\
$\,$ CO-GNN($\mu$, $\mu$) & 91.37$_{\pm0.35}$ &  {54.17$_{\pm0.37}$} &  {97.31$_{\pm0.41}$} &  {84.45$_{\pm1.17}$} & 76.54$_{\pm0.95}$ \\
$\,$ SAGE        & 85.74$_{\pm0.67}$ & {53.63$_{\pm0.39}$} & 93.51$_{\pm0.57}$ & 82.43$_{\pm0.44}$ & 76.44$_{\pm0.62}$\\
\midrule
\textbf{Graph Transformers} \\
$\,$ {Exphormer}   & {89.03$_{\pm0.37}$}  &  53.51$_{\pm0.46}$  &  90.74$_{\pm0.53}$  &  83.77$_{\pm0.78}$ & 73.94$_{\pm1.06}$\\
$\,$ NAGphormer  & 74.34$_{\pm0.77}$  &  51.26$_{\pm0.72}$  &  84.19$_{\pm0.66}$  &  78.32$_{\pm0.95}$ & 68.17$_{\pm1.53}$\\
$\,$ GOAT        & 71.59$_{\pm1.25}$  &  44.61$_{\pm0.50}$  &  81.09$_{\pm1.02}$  &  83.11$_{\pm1.04}$ & 75.76$_{\pm1.66}$\\
$\,$ GPS         & 82.00$_{\pm0.61}$  &  53.10$_{\pm0.42}$  &  90.63$_{\pm0.67}$  &  83.71$_{\pm0.48}$ & 71.73$_{\pm1.47}$\\
$\,$ GPS\textsubscript{GCN+Performer} (LapPE) & 83.96$_{\pm0.53}$ & 48.20$_{\pm0.67}$ & 93.85$_{\pm0.41}$ & 84.72$_{\pm0.77}$ & 77.85$_{\pm1.25}$\\
$\,$ GPS\textsubscript{GCN+Performer} (RWSE) & 84.72$_{\pm0.65}$ & 48.08$_{\pm0.85}$ & 92.88$_{\pm0.50}$ & 84.81$_{\pm0.86}$ & 76.45$_{\pm1.51}$\\
$\,$ GPS\textsubscript{GCN+Performer} (DEG) & 83.38$_{\pm0.68}$ & 48.93$_{\pm0.47}$ & 93.60$_{\pm0.47}$ & 80.49$_{\pm0.97}$ & 74.24$_{\pm1.18}$\\
$\,$ GPS\textsubscript{GAT+Performer} (LapPE) & 85.93$_{\pm0.52}$ & 48.86$_{\pm0.38}$ & 92.62$_{\pm0.79}$ & 84.62$_{\pm0.54}$ & 76.71$_{\pm0.98}$\\
$\,$ GPS\textsubscript{GAT+Performer} (RWSE) & 87.04$_{\pm0.58}$ & 49.92$_{\pm0.68}$ & 91.08$_{\pm0.58}$ & {84.38$_{\pm0.91}$} & 77.14$_{\pm1.49}$\\
$\,$ GPS\textsubscript{GAT+Performer} (DEG) & 85.54$_{\pm0.58}$ & 51.03$_{\pm0.60}$ & 91.52$_{\pm0.46}$ & 82.45$_{\pm0.89}$ & 76.51$_{\pm1.19}$\\
$\,$ GPS\textsubscript{GCN+Transformer} (LapPE) & OOM &  OOM & 91.82$_{\pm0.41}$ & 83.51$_{\pm0.93}$ & OOM \\
$\,$ GPS\textsubscript{GCN+Transformer} (RWSE) & OOM &  OOM & 91.17$_{\pm0.51}$ & 83.53$_{\pm1.06}$ & OOM \\
$\,$ GPS\textsubscript{GCN+Transformer} (DEG) & OOM &  OOM & 91.76$_{\pm0.61}$ & 80.82$_{\pm0.95}$ & OOM \\
$\,$ GPS\textsubscript{GAT+Transformer} (LapPE) & OOM &  OOM & 92.29$_{\pm0.61}$ & 84.70$_{\pm0.56}$ & OOM \\
$\,$ GPS\textsubscript{GAT+Transformer} (RWSE) & OOM &  OOM & 90.82$_{\pm0.56}$ & 84.01$_{\pm0.96}$ & OOM \\
$\,$ GPS\textsubscript{GAT+Transformer} (DEG) & OOM &  OOM & 91.58$_{\pm0.56}$ & 81.89$_{\pm0.85}$ & OOM \\
$\,$ GT          & 86.51$_{\pm0.73}$ & 51.17$_{\pm0.66}$ & 91.85$_{\pm0.76}$ & 83.23$_{\pm0.64}$ & 77.95$_{\pm0.68}$\\
$\,$ GT-sep      & 87.32$_{\pm0.39}$ & 52.18$_{\pm0.80}$ & 92.29$_{\pm0.47}$ & 82.52$_{\pm0.92}$ & 78.05$_{\pm0.93}$\\
\midrule
\multicolumn{4}{l}{\textbf{Heterophily-Designated GNNs}} \\
$\,$ CPGNN       & 63.96$_{\pm0.62}$ & 39.79$_{\pm0.77}$ & 52.03$_{\pm5.46}$ & 73.36$_{\pm1.01}$ & 65.96$_{\pm1.95}$\\
$\,$ FAGCN       & 65.22$_{\pm0.56}$ & 44.12$_{\pm0.30}$ & 88.17$_{\pm0.73}$ & 77.75$_{\pm1.05}$ & 77.24$_{\pm1.26}$\\
$\,$ FSGNN       & 79.92$_{\pm0.56}$ & 52.74$_{\pm0.83}$ & 90.08$_{\pm0.70}$ & 82.76$_{\pm0.61}$ & {78.86$_{\pm0.92}$}\\
$\,$ GBK-GNN     & 74.57$_{\pm0.47}$ & 45.98$_{\pm0.71}$ & 90.85$_{\pm0.58}$ & 81.01$_{\pm0.67}$ & 74.47$_{\pm0.86}$\\
$\,$ GloGNN      & 59.63$_{\pm0.69}$ & 36.89$_{\pm0.14}$ & 51.08$_{\pm1.23}$ & 73.39$_{\pm1.17}$ & 65.74$_{\pm1.19}$\\
$\,$ GPR-GNN     & 64.85$_{\pm0.27}$ & 44.88$_{\pm0.34}$ & 86.24$_{\pm0.61}$ & 72.94$_{\pm0.97}$ & 55.48$_{\pm0.91}$\\
$\,$ H2GCN       & 60.11$_{\pm0.52}$ & 36.47$_{\pm0.23}$ & 89.71$_{\pm0.31}$ & 73.35$_{\pm1.01}$ & 63.59$_{\pm1.46}$\\
$\,$ JacobiConv  & 71.14$_{\pm0.42}$ & 43.55$_{\pm0.48}$ & 89.66$_{\pm0.40}$ & 68.66$_{\pm0.65}$ & 73.88$_{\pm1.16}$\\
\midrule
\textbf{Graph SSMs} \\
$\,$ GMN          & 87.69$_{\pm0.50}$  &  {54.07$_{\pm0.31}$}  &  91.01$_{\pm0.23}$  &  84.52$_{\pm0.21}$ & --\\
$\,$ GPS + Mamba  & 83.10$_{\pm0.28}$  &  45.13$_{\pm0.97}$  &  89.93$_{\pm0.54}$  &  83.70$_{\pm1.05}$ & --\\
\midrule
\textbf{Ours} \\

$\,$ \ourmethod\textsubscript{GatedGCN} & {91.89$_{\pm0.30}$} & {52.60$_{\pm0.53}$} & {98.32$_{\pm0.59}$} & {85.51$_{\pm0.98}$} & {80.39$_{\pm1.04}$} \\

$\,$  \ourmethod\textsubscript{GPS} &  {91.08$_{\pm0.57}$} & {53.83$_{\pm0.32}$} & {98.39$_{\pm0.54}$} & {85.66$_{\pm1.01}$} & {80.32$_{\pm1.07}$} \\

\bottomrule\hline      
\end{tabular}
\end{table}


\end{document}